\newcommand{\leqnomode}{\tagsleft@true\let\veqno\@@leqno}
\newcommand{\reqnomode}{\tagsleft@false\let\veqno\@@eqno}
\title{Byzantine-Robust Gossip: Insights from a Dual Approach}
\author{\name{Renaud Gaucher} \email renaud.gaucher@polytechnique.edu \\
      \addr  Center for Applied Mathematics\\
        École polytechnique\\
        IP Paris \\
        Palaiseau\\
      \name{Aymeric Dieuleveut} \\
      \addr Center for Applied Mathematics\\
        École polytechnique\\
        IP Paris \\
        Palaiseau\\
      \name Hadrien Hendrikx\\
       \addr Inria Center of Grenoble Alpes University\\
        CNRS, LJK
    \\Grenoble, France}
\theoremstyle{theorem}
\newtheorem{theorem}{Theorem}
\newtheorem{lemma}{Lemma}
\newtheorem{proposition}{Proposition}
\theoremstyle{definition}
\newtheorem{definition}{Definition}
\newtheorem{assumption}{Assumption}
\theoremstyle{remark}
\newtheorem{example}{Example}
\newcommand{\G}{\mathcal{G}} %
\newcommand{\V}{\mathcal{V}} %
\newcommand{\E}{\mathcal{E}} %
\newcommand{\Bdir}{\tilde B} %
\newcommand{\Cdir}{\tilde C} %
\newcommand{\B}{\Bdir} %
\newcommand{\C}{\Cdir} %
\newcommand{\Lambdadir}{\tilde \Lambda} 
\newcommand{\Gdir}{\tilde \G} 
\newcommand{\Edir}{\tilde \E} 
\newcommand{\Xdir}{\tilde X} 
\newcommand{\Ydir}{\tilde Y} 
\newcommand{\1}{\textbf{1}} %
\newcommand{\R}{\mathbb{R}}
\newcommand{\N}{\mathbb{N}}
\newcommand{\vtau}{\boldsymbol{\tau}}
\newcommand{\edij}{{(i\sim j)}}
\newcommand{\dirij}{{i\rightarrow j}}
\newcommand{\eqdef}{\triangleq}
\newcommand{\eps}{\varepsilon}
\DeclareMathOperator{\vect}{span}
\DeclareMathOperator{\trace}{Tr}
\DeclareMathOperator{\ind}{1}
\renewcommand{\le}{\leqslant}
\renewcommand{\geq}{\geqslant}
\renewcommand{\ge}{\geqslant}
\DeclareMathOperator{\clip}{Clip}
\DeclareMathOperator{\Id}{Id}
\DeclareMathOperator{\EdgeControledGossip}{\EdgeClipGossip}
\DeclareMathOperator{\EdgeClipGossip}{EdgeClippedGossip}
\begin{document}

\maketitle
\begin{abstract}
Distributed learning has many computational benefits but is vulnerable to attacks from a subset of devices transmitting incorrect information. This paper investigates Byzantine-resilient algorithms in a decentralized setting, where devices communicate directly in a peer-to-peer manner within a communication network. We leverage the so-called \textit{dual approach} for decentralized optimization and propose a Byzantine-robust algorithm. We provide convergence guarantees in the average consensus subcase, discuss the potential of the dual approach beyond this subcase, and re-interpret existing algorithms using the dual framework. Lastly, we experimentally show the soundness of our method.
\end{abstract}

\section{Introduction}
\label{introduction}

Distributed optimization algorithms allow to harnessing the multiplication of devices to tackle the increasing computational complexity of machine learning models. Yet, the involvement of many parties in the learning process opens up the threat of misbehaving devices, that can purposely or inadvertently send arbitrarily harmful messages. Such device failures are commonly termed \textit{Byzantine} \citep{lamport1982byzantine}. This paper studies Byzantine robustness in the decentralized communication paradigm, which alleviates the risk of a single point of failure due to a central server by making devices directly communicate with each other in a peer-to-peer manner.

We consider devices (a.k.a nodes) that communicate synchronously within a communication network, abstracted as a graph $\G = (\V,\E)$, where each vertex is a node that communicates with its neighbors in the graph through the edges. We denote as $\V_h$ the Byzantine nodes, and $\V_h = \V\backslash\V_b$ the remaining nodes, called honest. Similarly, we split the edges in $\E = \E_h\cup \E_b$, where $\E_h$ are the edges linking honest nodes, while edges in $\E_b$ link honest to Byzantine nodes. The subgraph of honest nodes $\G_h:=(\V_h, \E_h)$ is always assumed to connected.  For any $i\in \V_h$, we denote $f_i: \R^d \to \R$ the local objective function on node $i$, and we study the optimization problem:
\begin{equation}
\label{problem::finite_sum_on_honest}
 \textstyle  \min_{x \in \R^d} \sum_{i \in \V_h} {f_i(x)}.
\end{equation}
We study decentralized algorithms to solve this problem, which rely on the \textit{gossip} communication protocol \citep{boyd2006randomized,nedic2009distributed,duchi2011dual,scaman2017optimal}, in which each node $i\in \V $ maintains a \textit{local parameter} $x_i\in \R^d$, and updates this parameter by performing approximate average of this parameter with those of its neighbors in the graph. 

Such gossip algorithms naturally arise when solving a well-chosen dual formulation of \Cref{problem::finite_sum_on_honest} with standard first-order methods. This \textit{dual approach} gives a principled framework to design and analyze efficient decentralized algorithms, leading for instance to acceleration~\citep{scaman2017optimal,kovalev2020optimal}, variance reduction~\citep{hendrikx2019accelerated} and has strong links with gradient tracking \citep{jakovetic2018unification}. This paper investigates the benefits of this approach in the Byzantine robust setting.

\textbf{Related works.} Since its introduction by~\citet{blanchard2017machine}, the setting of Byzantine-robust learning has been extensively studied. The large majority of works focus on the \textit{federated} case, in which a central server organizes the training and communicates directly with each node ~\citep{yin2018byzantine, chen2017distributed,  alistarh2018byzantine, li2019rsa, el2020genuinely, karimireddy2020byzantine, karimireddy2021learning, farhadkhani2022byzantine, allouah2023fixing}. Some work investigated the robust decentralized optimization setting, either with fully connected networks~\citep{el2021collaborative, farhadkhani2023robust} or with more generic networks \citep{he2022byzantine, wu2023byzantine, gaucher2025unified}. However, those works only skimmed over the challenges of decentralized optimization in the Byzantine setting, and many key optimization techniques such as gradient tracking or acceleration are still unexplored in the presence of adversaries. Developing the dual framework in the presence of adversaries would thus open the way to tackle these challenges.

\textbf{Contributions.} In this paper, we (i) leverage the dual approach to derive $\EdgeControledGossip$, a decentralized robust algorithm based on clipped dual gradient descent, (ii) derive clipping rules and their associated convergence guarantees in the average consensus subproblem, in which $\EdgeControledGossip$ recovers existing algorithms. Finally, (iii) we highlight the dynamics of the developed method compared to previous approaches and discuss its generalization beyond the average consensus case.

\textbf{Notations.} We denote $\1_n = (1, \ldots , 1)^T$; $\|\cdot\|_2$ or simply $\|\cdot\|$ (resp. $\langle \cdot,\cdot \rangle$) the Euclidean norm (resp. inner product) on $\R^d$; $\| \cdot \|_F $ the Frobenius norm of any matrix in~$\R^{d\times n}$, and $\langle M,N \rangle_F = \trace(M^T N)$ the corresponding inner product. We denote $X_{i,:}$ the $i$-th row of a matrix $X$. For a matrix $M$ in $\R^{n\times d}$, we denote $\overline{M} := n^{-1}\1_n\1_n^TM$, and for $p\in \{1,2,\infty\}$, we denote $\|M\|_{p,2}$, the $p$-norm of the vector of 2-norms of the rows of $M$, i.e., $ \| (\|M_{i, :}\|_2)_{i\le n}\|_p$.
 Moreover, $C^\dagger$ is the Moore Penrose inverse of~$C$ and $\Id$ is the identity matrix. We identify $\R^{\E}$ to $\R^{|\E|}$ and $\R^{\V}$ to $\R^{n}$. We denote $a\wedge b = \min(a,b)$.
 
\section{Deriving EdgeClippedGossip}%

This section describes a dual approach to Byzantine robustness. We first introduce a dual gossip algorithm, then show how it can be made robust using $\EdgeControledGossip$. 

\textbf{Setting:} We denote $n_h = |\V_h|$ and $n_b=|\V_b|$ the total number of honest and Byzantine nodes, and $N_h(i)$ and $N_b(i)$ the number of honest and Byzantine neighbors of node $i$.
Each agent owns and iteratively updates a local parameter $x_i^t$. At time $t$, we denote  $X^t := [x_1^t, \ldots, x_n^t]^T \in \R^{\V \times d}$ the parameter matrix, which we split into honest and Byzantine as $X^t = \begin{pmatrix} X^t_h \\ X^t_b\end{pmatrix}$, 
where $X^{t}_{h}= [(x^t_i)_{i\in \V_h}]^T$ (resp.~$X^{t}_{b}$) is the sub-matrix containing the models held by the honest workers (resp.~Byzantine). Note that Byzantine nodes %
can declare different values to each of their neighbors. As such, only the number and positions of edges with Byzantine nodes matter, and we can w.l.o.g identify the number of edges linking Byzantine and honest nodes $|\E_b|$ to the number of Byzantine workers $n_b$.

\subsection{Gossip - average consensus case} 
\label{subsec:gossip_ave_consens}

Gossip communications \citep{boyd2006randomized} consist in sharing information between neighbors through local averaging steps. The standard gossip protocol writes $X^{t+1} = W X^t$, where $W \in [0,1]^{n\times n}$ is a \textit{gossip matrix}.
\begin{definition}[Gossip matrix] \label{def:gossip_matrix}
    A matrix ${W\in\R_+^{n\times n}}$ is a \textit{gossip matrix} for a graph $\G=(\V,\E)$ if: 
   (i) $W$ is symmetric and doubly-stochastic: $W \1_n = \1_n$, and $\1_n^T W = \1_n^T$. 
        (ii) $W$ is supported on the edges of the network: $W_{ij} \neq 0$ if and only if $i=j$ or $\edij\in \E$.         
\end{definition}
    
Such a matrix $W$ has eigenvalues $1 = \mu_1(W) \ge |\mu_2(W)| \ge \ldots \ge |\mu_n(W)| $. The propagation of information in the graph is characterized by the \textit{spectral gap} of $W$, which is defined as $\gamma_W := 1 -|\mu_2(W)|$. Note that, if the graph $\G$ is connected, then $\gamma_W > 0$.

\begin{example}[Gossip from Graph Laplacian]
\label{ex:gossip_from_laplacian}
    Consider the Laplacian matrix of the graph $L = D - A$ where $D$ is the diagonal matrix of degrees and $A$ is the adjacency matrix (i.e., $A_{ij}= \ind_{\edij \in \E}$). Then, $W_L= \Id-\eta L$ is a gossip matrix for $\eta \le \nicefrac{1}{\mu_n(W)}$, with spectral gap  $\gamma_{W_L} = \eta {\mu_{\min}^+ (L)}$, where $\mu_{\min}^+ (L)$ is the second smallest eigenvalue of $L$.
\end{example}

The \textit{average consensus} problem is an important special case of \cref{problem::finite_sum_on_honest}.

\begin{definition}\label{def:ACP}
     The approximate \textbf{average consensus problem} (ACP) consists in computing ${\overline{x^*} = \frac{1}{n_h}\sum_{i\in \V_h} x_i^*}$, where each node $i\in \V_h$ holds a value $x_i^*$, which is equivalent to Problem~\eqref{problem::finite_sum_on_honest} with $f_i(x) = \|x - x_i^*\|^2$. In the presence of Byzantines, only an approximate estimation of $\overline{x^*}$ can be found.
\end{definition}

When there are no Byzantine nodes, standard gossip-averaging results ensure linear convergence of the parameters to the average \citep{boyd2006randomized}: $\sum_{i\in\V} \|x_i^t - \overline{x^*}\|^2 \le (1-\gamma)^{2t} \sum_{i\in\V} \|x_i^0 - \overline{x^*}\|^2$. Yet, this result is not robust to adversarial nodes: as the standard gossip communication uses averaging steps, it is \textit{vulnerable to a single Byzantine node}, which can drive all honest nodes to any position~\citep{blanchard2017machine}.

\subsection{Dual approach in decentralized optimization}

The use of a dual formulation to solve decentralized optimization is very popular \citep{jakovetic2020primal,uribe2020dual}. Not only has it been at the heart of key advances such as acceleration \citep{scaman2017optimal,hendrikx2019accelerated,kovalev2020optimal} and variance reduction \citep{hendrikx2021optimal}, but it also allows us to understand methods such as gradient tracking \citep{jakovetic2018unification}. It is thus natural to investigate whether the dual approach can be used for Byzantine robustness.

We first introduce the dual approach in a Byzantine-free setting. Then, we interpret dual variables and explain their use in the Byzantine-robust setting. Eventually, we instantiate the algorithm in the ACP setting and show that it recovers existing algorithms.

\subsubsection{Byzantine-free dual approach.}

\label{sec:dual_without_byzantine}
We first show how the dual approach unrolls on Problem~\eqref{problem::finite_sum_on_honest}, in a setting without Byzantine agents. Let us write Problem~\eqref{problem::finite_sum_on_honest} as a constrained optimization problem on $F(X)=\sum_{i \in \V} f_i(X_{i:})$, where the constraints write $X_{1:}=\ldots=X_{n:}$, or equivalently $\forall \edij\in\E, X_{i:} = X_{j:}$ for a connected graph. By encoding this constraint into a matrix $C^T\in \R^{\E\times\V}$, such that $[C^TX]_{\edij,:} = X_{j:} - X_{i:} \label{def::constraint_matrix}$ for any edge $ \edij $ in~$ \E$, we obtain a \textit{primal version} of Problem~\eqref{problem::finite_sum_on_honest}, which is, under convexity of functions $f_i$, equivalent to a \textit{dual problem}:
\begin{align}
 \min_{\substack{X \in \R^{ \V \times d},\  C^T X = 0}} F(X) 
        &=   \min_{\substack{X \in \R^{ \V \times d}}} \max_{\Lambda \in \R^{\E \times d}} \left\{ F(X) - \langle \Lambda, C^T X\rangle \right\} \label{eq:pb_primal} \tag{Primal}\\
        &=-  \min_{\Lambda \in \R^{\E \times d}} F^*(C\Lambda), \label{eq:pb_dual}\tag{Dual}
\end{align}
where $F^*(Y):= \sup_{X \in \R^{\V\times d}} \langle X, Y\rangle - F(X)$ is the Fenchel conjugate of $F$. Note that as $F$ is separable, $F^*$ can be decomposed as $F^*(Y)=\sum_{i\in\V}f_i^*(Y_{i:})$. The dual approach leverages \cref{eq:pb_dual} to design decentralized optimization algorithms, for instance by solving the dual problem using gradient descent (GD). If we denote $Y := C\Lambda \in \R^{\V\times d}$ the dual variable deriving from the Lagrangian multipliers $\Lambda \in \R^{\E \times d}$, GD with step-size $\eta$ writes:
\begin{equation}
\label{eq:dualGDnodes}
  \Lambda^{t+1} = \Lambda^t - \eta C^T \nabla F^*(C \Lambda^t) \implies Y^{t+1} = Y^t - \eta CC^T \nabla F^*(Y^t). 
\end{equation}

\textbf{Dual gossip update.} The incidence matrix $C$ is a root of the Laplacian matrix $L$, as $CC^T = L$. It follows that in the case of the average consensus problem where $\nabla f_i^*(y) = y + x_i^*$, \Cref{eq:dualGDnodes} consists in doing an update on the dual variable $Y$, which for the associated primal variable $X := \nabla F^*(Y) = Y + X^*$ recovers the standard gossip protocol $X^{t+1} = (I - \eta L)X^t.$ 

\textbf{Convergence.} Using standard GD results, $\Lambda^t$ in Equation \eqref{eq:dualGDnodes} converges linearly to $\Lambda^*$, the solution of \eqref{eq:pb_dual}, under strong convexity and smoothness assumptions on the $f_i$. It follows that $X^t := \nabla F^*(C\Lambda^t)$ converges to the solution of \eqref{eq:pb_primal}. Indeed:
    \begin{enumerate}[noitemsep, topsep=0in]
        \item \emph{Invariant.} As $Y^t \eqdef C\Lambda^t$, $ \ker{C^T} = \vect{\1}$, and $\nabla F(X^t) = Y^t$, the sum of (primal) gradients is null by design:
        \begin{equation} \textstyle \label{invariant_dualGD}\tag{Invariant}
        \sum_{i \in \V_{h}} \nabla f_i(X_{i:}^t) = \1^T\nabla F(X^t) = \1^T Y^t = 0.
        \end{equation}
        \item \emph{Asymptotic consensus.} Reaching a stationary point of \cref{eq:dualGDnodes} means that $\nabla F^*(Y) \in \ker{C^T}$, thus $X_{i:} = X_{j:}$ for all $\edij \in \E$, i.e. consensus is reached among nodes.
    \end{enumerate}%

Note that although dual gradients are arguably hard to compute in the general case, several approaches allow to leverage the (primal-)dual approach while either bypassing the problem or alleviating this cost~\citep{hendrikx2020dual,kovalev2020optimal}.

\subsubsection{Duality in the Byzantine setting.}

In~\eqref{eq:pb_dual}, the Lagrangian multipliers $\Lambda^t \in \R^{\E\times d}$ correspond to the \emph{influence} between nodes of the graph. Precisely, each row of $\Lambda$, indexed by edges as  $\Lambda_{\edij}$, corresponds to the accumulated influence between nodes $i$ and $j$ through edge $\edij$.  Indeed, the (primal) gradient is equal to the aggregation of neighboring Lagrangian multipliers, $\nabla f_i(x_i^t) \eqdef y_i^t = \sum_{\edij} \Lambda_{\edij}^t$, while the update of each $\Lambda_{\edij}^t$ multiplier involves only the two nodes of the edge: $\Lambda_{\edij}^{t+1} = \Lambda_{\edij}^t - \eta [ \nabla f_j^*(y_j^t) - \nabla f_i^*(y_i^t)]$. This latter update thus corresponds to the \emph{update of the influence} between node $i$ and node $j$ during a time step. 

It follows that applying an operator on the update on the row $\edij$ of $\Lambda^t$ corresponds to altering the influence between nodes $i$ and $j$. For instance, setting the row $\Lambda_{\edij}$ to $0$ corresponds to removing edge $\edij$ from the graph.

To address the vulnerability of plain gossip averaging to Byzantine parties, we regulate the influence update between nodes by clipping the update on the edges. Formally, for $\tau\ge 0$ and $x\in \R^d$, the projection of $x$ onto the $L_2$ ball of radius $\tau$ writes:
\begin{equation}\label{eq:def_clipping}
    \clip(x;\tau) := \frac{x}{\|x\|_2}\min(\|x\|_2, \tau).
\end{equation}
We thus consider for a general vector $\vtau\in \R^{\E}$, the regulation of the influence update on $\edij$, by setting $\Lambda_{\edij}^{t+1} = \Lambda_{\edij}^t - \eta \clip( \nabla f_j^*(y_j^t) - \nabla f_i^*(y_i^t), \vtau^t_{\edij})$, which writes under matrix form as

\begin{align}\label{eq:update_Edop}
 \Lambda^{t+1} &= \Lambda^t - \eta \clip \left(C^T \nabla F^*(C\Lambda^t), \vtau^t \right)
\end{align}

This gives \cref{alg::EdgeControlGossip}, called $\EdgeControledGossip$, which translates the update described above in terms of primal and dual iterates.

\textbf{Oracle strategy.} If  the set of Byzantine edges $\E_b$ is known, then Equation~\eqref{eq:update_Edop} 
with $\tau_{\edij}= \infty \cdot \ind_{\edij \notin \E_b}$ exactly recovers the generalized gossip algorithm \ref{eq:dualGDnodes} on the subgraph of honest nodes.

The previous derivations are summarized into the following result.

\begin{proposition}\label{prop:fixed_tau_as_PGD}
    Consider a threshold $\tau>0$ and $\vtau = \tau \1_{\E} $. Then, \Cref{alg::EdgeControlGossip} corresponds to a clipped gradient descent on~the dual problem~\eqref{eq:pb_dual}: $\Lambda^{t+1} = \Lambda^t - \eta \Pi_{\tau}\left(C^T \nabla F^*(C\Lambda^t)\right)$, %
where $\Pi_{\tau}$ is the projection  on a ball of radius $\tau$ for the infinite-2 norm $\|M\|_{\infty, 2}:= \sup_{\edij\in \E}\|M_{\edij}\|_2$.
\end{proposition}

\begin{algorithm}[t]
   \caption{Byzantine-Resilient Decentralized Optimization with $\EdgeControledGossip$ $(\vtau^t)$}
   \label{alg::EdgeControlGossip}
\begin{algorithmic}
   \STATE {\bfseries Input:} $\{f_j\}$, $\{y_j^0\}$, $\eta$, $\{\vtau^t\}$
   \FOR{$t=0$ {\bfseries to} $T$}
   \FOR{$j=1, \ldots,n$ \textbf{in parallel}}
   \STATE Compute $x_j^{t} = \nabla f_j^*(y_j^t)$ if $j \in \V_h$ else $*$
   \STATE Share parameter $x_j^t$ with neighbors
   \STATE
   $\ y_{j}^{t+1} = y_j^t - \eta \sum_{i \sim j} \clip\left(x_j^t - x_i^t; \vtau^t_{ \edij}\right)$
   \ENDFOR
   \ENDFOR
\end{algorithmic}
\end{algorithm}

Next, we instantiate the algorithm in the ACP case, in which the $\nabla f_i^*$ have a special form, and which is the setup under which theoretical guarantees are given in \Cref{section::convergence_global_clipping,section::convergence_local}.

\subsubsection{EdgeClippedGossip for the ACP.} 
\label{section:LyapunovByzantinDual?}

From now on, we focus on \Cref{alg::EdgeControlGossip} for the Average Consensus Problem (\cref{def:ACP}).

We denote $X^* = [x_1^*,\ldots,x_{n}^*]^T$, the matrix of  node-wise optimal parameters.
In the ACP setting, $X^t = \nabla F^*(C\Lambda^t) \overset{\text{\tiny ACP}}{=} C\Lambda^t +X^*$, thus  \Cref{eq:update_Edop} writes
\begin{align}
    \Lambda^{t+1}   = \Lambda^t - \eta \clip \left(C^T \nabla F^*(C\Lambda^t) ; \vtau^t\right) 
\Leftrightarrow    
    X^{t+1} = X^t - \eta C \clip(C^T X^t; \vtau^t).  
    \label{eq:upXACP}
\end{align}
In order to distinguish the impact of honest and Byzantine nodes, we decompose Matrix $C$ into blocks as
${C = \begin{pmatrix}
    C_h & C_b\\ 0 & -\Id_{\E_b}
\end{pmatrix}}
$, and $X$ as $\begin{pmatrix}
    X_h\\ X_b
\end{pmatrix}$. The update on $(\Lambda^t)$ in \Cref{prop:fixed_tau_as_PGD} thus writes as 
\begin{equation}
\label{eq:EdOp_AvgConsPb_X}
        X_h^{t+1} = X_h^t - \eta (C_h \, C_b)\clip(C^T X^t; \vtau^t); \ \ 
        X_b^{t+1} = *.
        \end{equation}

We analyze this sub-problem in the following section.

\section{Global Clipping} \label{section::convergence_global_clipping}

We now investigate average consensus $\EdgeControledGossip$ under a \textit{global} clipping threshold, and highlight a choice of thresholds $(\vtau^t)_{t\geq0}$ for which we derive convergence guarantees. All proofs for this section can be found in Appendix~\ref{app:proof_Thm_GCR}.

\subsection{The Global Clipping Rule}
To analyze \Cref{eq:EdOp_AvgConsPb_X}, we decompose the influence update between honest and Byzantine edges as $(C^TX^t)_h = C_h^T X_h^t \in \R^{\E_h \times d}$ and $(C^TX^t)_b =  C_b^T X_h^t  - X_b^t \in  \R^{\E_b\times d}$.  
For a given threshold $\tau^t$ we define as $\kappa^t$ the number of honest edges that are modified by $\clip(\,\cdot\,, \tau^t)$, i.e., 
\begin{equation}\label{eq:kappa_t_glob_def}
\kappa^t =\text{card}\left\{\edij\in \E_h, \  \|(C_h^T X_h^t)_{\edij}\|_2 > \tau^t  \right\}.
\end{equation}
We then  denote $\|C_h^T X_h^t\|_{1,2;{\kappa^t}}$ 
the $1,2$-norm of the sub-matrix of clipped messages, i.e.:
\begin{align*}
     \|C_h^T X_h^t\|_{1,2;{\kappa^t}} :=& \sum_{\substack{\edij \in \E_h, \ \  \kappa^t \text{ largest}}} \|(C_h^T X_h^t)_{\edij}\|_2.
\end{align*}
These quantities thus represent the fractions of weights of messages in $(C^T X)_h = ((x_i-x_j)_{\edij\in \E_h})$ that are affected by clipping. We now introduce
\begin{equation*}
            \Delta_{\infty} := \sup_{\substack{U_b \in \R^{\E_b \times d}, \ \  \|U_b\|_{\infty,2}\le 1}} \|C_h^{\dagger}C_b U_b\|_{\infty,2},
    \end{equation*} 
which quantifies how much the Byzantine nodes can increase the variance of the honest nodes. We will need $\Delta_\infty < 1$ to obtain non-vacuous results, as otherwise, it means that Byzantine nodes introduce more variance than what is reduced by the contraction obtained through gossip averaging. $\Delta_\infty$ decreases when removing Byzantine edges from the graph, and can be bounded for some graphs. 
\begin{lemma}
  For $\G_h$ fully connected where each honest node has exactly $N_b(i) = N_b$ Byzantine neighbors, ${\Delta_{\infty} \le \frac{2N_b}{n_h} \sqrt{d\wedge n_h N_b}}$.
\end{lemma}
To satisfy $\Delta_\infty < 1$, two regimes appear: it is sufficient that $n_h>2N_b\sqrt{d}$ for small dimension $d$, but $n_h> 4 N_b^3$ for large $d$, which is quickly untractable. Therefore, the limit fraction of Byzantine neighbors is significantly impacted by the dimension. 
We now define the following clipping threshold rule.
\begin{definition}[Global Clipping Rule (GCR)]
\label{asmpt::condition_on_clipped_edges}
    Given a step size~$\eta$, a sequence of clipping thresholds $(\tau^s)_{s\geq 0}$, satisfies the \textit{Global Clipping Rule} if for all $t\ge 0$, either $\tau^t = 0$, or
    \[
    \|C_h^T X^t_h\|_{1,2;\kappa^t} \ge  
    \Delta_{\infty}\|C_h^T X^t_h\|_{1,2} + \eta \frac{|\E_b|^2}{n_h} \sum_{0\le s \le t} \tau^s.
    \]  
\end{definition}
This clipping rule enforces that the clipping threshold is \emph{below a certain value}, since decreasing $\tau^t$ increases $\kappa^t$, which in turn increases the left term. In particular, $\tau^t$ is reduced until the sum of the norms of \textit{clipped} honest edges is greater than the sum of the norms of all honest edges, shrunk by the contraction $\Delta_{\infty}$, plus a term proportional to the sum of previous thresholds (which is known by all nodes). This latter term,  $\eta |\E_b|^2\sum_{s\le t} \tau^s/{n_h} $ can be understood as a bound on the maximum \textit{bias} induced by Byzantine nodes after~$t$ steps, i.e. $1^T(\overline{X}^t_h - \overline{X}_h^0)$. 

\textbf{Over-clipping.} The GCR defines an \textit{upper bound} on $\tau^t$, but any value below this threshold works. By denoting $\kappa^{t,*}$ the smallest $\kappa^t$ such that the GCR holds, we must ensure that \textit{at least} $\kappa^{t,*}$ honest edges are clipped. A solution to this end consists in clipping $\kappa^{t,*} + |\E_b|$ edges overall, by removing first the $ |\E_b|$ largest edges, and computing $\kappa^{t,*}$ on the remaining edges afterwards. Note that the GCR still holds if the number of Byzantine edges is over-estimated. We provide in \Cref{app:simple_GCR} a simplified  (yet looser) version of the GCR.

\textbf{Global coordination.} Implementing a global clipping threshold that satisfies~\Cref{asmpt::condition_on_clipped_edges} requires some global coordination, which is hard in general in this decentralized setting. As such, we propose a proof of concept on what can be achieved by this approach under a favorable assumption of their communications, more than a practical decentralized algorithm.

\subsection{Convergence result}
Recall that $\mu_{\max}(L_h)$ is the maximal eigenvalue of the honest subgraph Laplacian, and $\Delta_{\infty}$ depends only on the topology of the full graph $\G$. We note $\overline{X_h^*} =\1_h\1_h^T X_h^* / n_h$

\begin{theorem} \label{thm::Descent global clipping}
Assuming $ \Delta_{\infty} <1$, let $\eta$ be a constant step size:
$
\eta \le \nicefrac{1}{(1 + |\mathcal{E}_h|(1-\Delta_{\infty}))\mu_{\max}(L_h)}.
$
If the clipping thresholds $(\tau^s)_{s\geq0}$ satisfy \cref{asmpt::condition_on_clipped_edges} (GCR), then: 
\[
\|X_h^{t+1}-\overline{X_h^*}\|_{F}^2 \le \|X_h^{t}-\overline{X_h^*}\|_{F}^2 - \ \ \eta \sum_{\substack{i\sim j\, \in\, \E_h} }\|x^t_i-x_j^t\|^2_2 \1_{\|x^t_i-x_j^t\|^2_2 \le \tau^t}.
\]
\end{theorem}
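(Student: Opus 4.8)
The plan is to treat one step of \eqref{eq::EdOp_AvgConsPb_X} as a clipped (hence inexact) projected gradient step on the dual problem \eqref{pb:fullDual}, and to run a descent-lemma argument in the primal variable $X_h$. Write the clipped flow as $\clip(C^TX^t;\tau^t) = C^TX^t - R^t$, where $R^t$ is supported on the edges that are clipped; on honest edges $R^t_{\edij} = ((x_i^t-x_j^t) - \clip(x_i^t-x_j^t;\tau^t))$ has norm $\|(C_h^TX_h^t)_{\edij}\|_2 - \tau^t$ when clipped and $0$ otherwise, and on Byzantine edges the contribution is controlled through $\Delta_\infty$. Then $X_h^{t+1} = X_h^t - \eta (C_h\ C_b)(C^TX^t - R^t)$. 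I would expand $\|X_h^{t+1}-\overline{X_h^*}\|_F^2$ and, since $(C_h\ C_b)C^TX^t = C_hC_h^TX_h^t + C_b(C_b^TX_h^t-X_b^t) = L_h X_h^t + C_b(C^TX^t)_b$ and $L_h\overline{X_h^*}=0$, organize the cross term into a "clean gossip" part $-2\eta\langle X_h^t-\overline{X_h^*}, L_hX_h^t\rangle_F = -2\eta\|C_h^TX_h^t\|_F^2$ (this already accounts for the unclipped honest edges, plus the clipped ones up to corrections) and an "error" part coming from $C_b$ and from $R^t$.

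The key steps, in order: (1) decompose $(C_h\ C_b)\clip(C^TX^t;\tau^t)$ into the honest-Laplacian term $L_hX_h^t$, the Byzantine term $C_b(C^TX^t)_b$, and the clipping-residual term $(C_h\ C_b)R^t$; (2) bound the quadratic (self) term $\eta^2\|(C_h\ C_b)\clip(C^TX^t;\tau^t)\|_F^2$ using $\|(C_h\ C_b)\|_2^2 \le$ something like $(1+|\E_h|(1-\Delta_\infty))\mu_{\max}(L_h)$ — this is exactly where the step-size condition enters, so the $\eta^2$ term is dominated by half of one of the $\eta$-linear terms; (3) handle the Byzantine cross term $2\eta\langle X_h^t-\overline{X_h^*}, C_b(C^TX^t)_b\rangle$ by writing $X_h^t-\overline{X_h^*} = C_h^\dagger C_h^T X_h^t$ (using that $X_h^t-\overline{X_h^*}$ lies in the range of $C_h$ since $\G_h$ is connected, and $C_h C_h^\dagger$ is the projection onto that range), then Cauchy–Schwarz to get $\Delta_\infty\|C_h^TX_h^t\|\cdot\|(C^TX^t)_b\|$-type bounds that the GCR's $\Delta_\infty\|C_h^TX_h^t\|_{1,2}$ term is designed to absorb; (4) similarly bound the residual cross/self terms involving $R^t$, whose $1,2$-norm is exactly $\|C_h^TX_h^t\|_{1,2;\kappa^t} - \kappa^t\tau^t$ on honest edges plus a Byzantine-edge piece controlled by $|\E_b|\tau^t$; (5) invoke the GCR inequality $\|C_h^TX_h^t\|_{1,2;\kappa^t} \ge \Delta_\infty\|C_h^TX_h^t\|_{1,2} + \eta\frac{|\E_b|^2}{n_h}\sum_{s\le t}\tau^s$ to show that all the "bad" terms (Byzantine bias + clipping residual, accumulated) are dominated, leaving only $-\eta\sum_{\edij\text{ not clipped}}\|x_i^t-x_j^t\|_2^2$; (6) conclude, possibly by an induction on $t$ so that the accumulated-bias term $\sum_{s\le t}\tau^s$ is available.

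The main obstacle I expect is step (3)–(5): correctly bookkeeping the interaction between the Byzantine term and the clipping residual and matching them, term by term, against the two pieces of the GCR — the $\Delta_\infty\|C_h^TX_h^t\|_{1,2}$ piece (which must absorb the Byzantine-variance inflation, hence the appearance of $\Delta_\infty$ and the $\|\cdot\|_{1,2}$ rather than Frobenius norm) and the $\eta\frac{|\E_b|^2}{n_h}\sum_{s\le t}\tau^s$ piece (which must absorb the cumulative bias, suggesting the proof is genuinely telescoping/inductive rather than a one-step estimate). Getting the norm equivalences right — passing between $\|\cdot\|_F$, $\|\cdot\|_{1,2}$, $\|\cdot\|_{2,2}$, and $\|\cdot\|_{\infty,2}$ with the correct cardinality factors ($|\E_b|$, $|\E_h|$) — and making them consistent with the precise constants in the step-size bound and the GCR is the delicate part; everything else is a standard "descent lemma plus inexact gradient" computation. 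A secondary subtlety is justifying $X_h^t - \overline{X_h^*}\in\mathrm{range}(C_h)$ at every iterate, which should follow by induction from the update since $C_h$ appears as a left factor, together with $\overline{X_h^0}$ being handled by the initialization.
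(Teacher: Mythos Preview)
Your high-level plan (descent lemma on $\|X_h^t-\overline{X_h^*}\|_F^2$, write the update as gossip plus error, absorb the error via the GCR) matches the paper. However, two of the steps as stated would fail.

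\textbf{Step (3): the identity $X_h^t-\overline{X_h^*}=C_h^\dagger C_h^T X_h^t$ is false.} The update contains a $C_b$ term, and $\1_{n_h}^T C_b\neq 0$ (each column of $C_b$ has a single $\pm 1$ on the honest node incident to that Byzantine edge). Hence Byzantine attacks \emph{move the honest mean}: $\overline{X_h^{t}}\neq\overline{X_h^*}$ in general, and $X_h^t-\overline{X_h^*}$ has a nonzero component along $\1_{n_h}$, i.e.\ it is \emph{not} in $\mathrm{range}(C_h)$. Your ``secondary subtlety'' is therefore not a subtlety but the crux: you must split $X_h^t-\overline{X_h^*}=(X_h^t-\overline{X_h^t})+(\overline{X_h^t}-\overline{X_h^*})$. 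The first (variance) part lies in $\1^\perp$ and can indeed be paired with $C_h^\dagger C_b$ to produce the $\Delta_\infty\|C_h^TX_h^t\|_{1,2}$ term. The second (bias) part equals $\frac{\eta}{n_h}\1\1^T\sum_{s<t}\tau^sC_bU_b^s$, and its inner product with the current Byzantine push is what produces the $\eta\frac{|\E_b|^2}{n_h}\sum_{s\le t}\tau^s$ term in the GCR. This is where the accumulated sum enters, not in a generic induction at step (6).

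\textbf{Steps (1)/(3)--(4): separating $C_b(C^TX^t)_b$ from its clipping residual is unworkable.} Both $C_b(C^TX^t)_b$ and $C_bR_b^t$ are individually unbounded, since $X_b^t=*$ is arbitrary; only their difference $C_b\clip((C^TX^t)_b;\tau^t)$ is controlled (its $\|\cdot\|_{\infty,2}$ is at most $\tau^t$). You therefore cannot bound the Byzantine cross term and the Byzantine residual separately as your steps (3) and (4) suggest. The paper avoids this by never splitting them: it writes the error directly as $E^t=C_h\big(C_h^TZ_h^t-\clip(C_h^TZ_h^t;\tau^t)\big)+\tau^tC_bU_b^t$ with $\|U_b^t\|_{\infty,2}\le 1$, and runs the whole descent argument on $Z_h^{t+1}=(I-\eta L_h)Z_h^t+\eta E^t$.

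Once you make these two fixes (keep the Byzantine term clipped from the outset, and handle the bias cross term $\langle\overline{Z_h^t},\overline{Z_h^{t+1}}-\overline{Z_h^t}\rangle$ separately via the accumulated formula), your outline becomes exactly the paper's proof; the step-size condition then arises from bounding the second-order term $\eta^2\|L_hZ_h^t-E^t\|^2$, where the GCR is used a second time.
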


\Cref{thm::Descent global clipping} ensures that the squared distance of honest models $X_h^{t}$ to the global optimum $\overline{X_h^*}$ decreases at each step, although not necessarily all the way to 0. In essence, what happens is the following: each step of $\EdgeControledGossip$ pulls nodes closer together but at the cost of allowing Byzantine nodes to introduce some bias so that $\overline{X_h^{t}} - \overline{X_h^*} \neq 0$. This trade-off appears in the GCR:  at some point, further reducing the variance is not beneficial because it introduces too much bias, and so the GCR enforces that $\tau^t \approx 0$, which essentially stops the algorithm. Yet, regardless of the initial heterogeneity, nodes benefit from running $\EdgeControledGossip$.

\textbf{Asymptotic error.} This theorem ensures that finding small $\tau^t$ that satisfy the GCR reduces the distance to the optimum, yet it does not characterize the asymptotic error. Furthermore, the smallest $\tau^t$ satisfying the GCR is an oracle quantity, in the sense that it requires computing a sum of node-wise differences on honest neighbors only.

\section{Beyond global coordination and averaging} \label{section::convergence_local}

In this section, we investigate how to extend the dual approach beyond the average consensus case. We discuss de discrepancies between the developed dual framework and local clipping methods such as \citep{he2022byzantine, gaucher2025unified}. Then we discuss on the difficulties to extend the dual framework beyond the average consensus setting.

\subsection{Dual interpretation of existing algorithms}

Several works investigated the problem of robust average consensus. Among those, \citet{he2022byzantine, farhadkhani2023robust, gaucher2025unified} propose robust communication algorithms that rely, under the hood, on performing gossip communication while altering the update on the Lagrangian multipliers at each step. All those algorithms can be written as:
\begin{equation}%
\label{eq:F_robust_gossip}
\tag{F-RG}
\begin{cases}
    x_i^{t+1} =x_i^t - \eta F \big( (x_i^t - x_j^t)_{j \sim i}\big) \quad \text{if } i\in \V_h\\
    x_i^{t+1} = \, *  \qquad \qquad \qquad \qquad\qquad \text{if } i\in \V_b.
\end{cases}
\end{equation}
where $F$ is an aggregation function that verifies a  $(b,\rho)$-\textit{robust summand properties}. The SOTA $(b,\rho)$-robust summand, Clipped Sum, relies on a clipping strategy: 
\(
{\rm CS}(z_1,\ldots,z_n) :=  \sum_{i=1}^n \clip(z_i;\tau).
\)
Interestingly, the success of those methods relies partially on the control of the quantities $(x_i - x_j)_{i \sim j}$. In other words, they consist in altering the {Lagrangian multipliers update} $([C^TX^t]_{i\sim j})$ based on the norm of each entry. 
\begin{proposition}
    When $\tau$ is a fixed constant, \eqref{eq:EdOp_AvgConsPb_X} is equivalent to \cref{eq:F_robust_gossip} with Clipped Sum aggregation.%
\end{proposition} 

\textbf{Local Clipping thresholds.} An other key element of the robustness of CS$_{+}$-RG, is that the clipping threshold is tailored locally, in a node-wise manner. Typically, the clipping threshold $\tau_i$ on a node $i$ is defined as the $2b$ largest value within $(\|x_i - x_j\|)_{j\sim i}$. It follows that the influence update of two neighbors may not be symmetric when $\clip(x_i - x_j;\tau_i) \neq \clip(x_i - x_j; \tau_j)$. This asymmetry between clipping thresholds in CS-RG may bias the average of honest parameters independently of the attack of Byzantine nodes. 

\begin{figure}[h]
\centering
\includegraphics[width=0.8\textwidth]{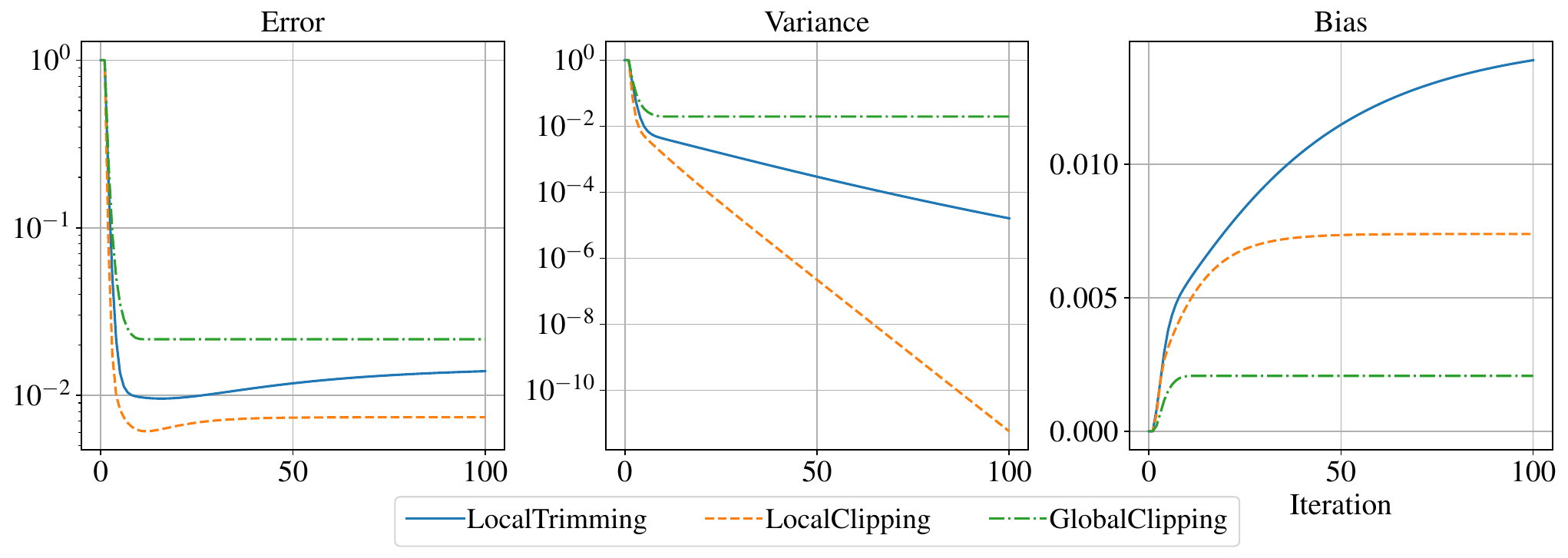}
\vspace{-0.17in}
\caption{ Bias - Variance decomposition of the parameters of honest nodes. GTS-RG (LocalTrimming), CS-RG (LocalClipping) and the GCR are tested under ALIE attack, in a Two Worlds graph (cf. \cref{section::experiments}) with 64 honest nodes in two cliques, each being neighbor to 16 nodes in the other clique, and to 3 Byzantine nodes.}
\label{plot:bias_variance_tradeoff}
\end{figure}

\textbf{Trading bias and consensus.} This bias induced by the communication scheme is a fundamental difference with the dual approach such as exposed in \cref{alg::EdgeControlGossip}, where the bias is only induced by the Byzantines nodes. This discrepancy is reflected in the analysis: in the bias of \cref{alg::EdgeControlGossip} is controlled directly in the GCR, while for instance \citep{gaucher2025unified} controls the additional bias at each step, and sum them up by using that honest nodes converges quickly enough to the same value. Overall, \cref{alg::EdgeControlGossip} with the GCR does not alter the average of honest parameters and thus can induce less bias in the communication process than local clipping methods. Yet, there is no free lunch: the variance of the honest parameters does not converge to $0$ using the GCR, while CS-RG for instance ensures asymptotic consensus. We illustrate this point in \cref{plot:bias_variance_tradeoff}.

\subsection{Dual formalism for asymmetric clipping}
Considering the weak convergence guarantee of the GCR, we modify slightly the dual formalism to allow local clipping thresholds. Then we discuss the major difficulties encountered when extending the dual approach beyond the case of average consensus.

\textbf{From symmetric to asymmetric clipping.} In Byzantine-free settings (see~\Cref{sec:dual_without_byzantine}), dual gradient descent leads to gossip algorithms for which the influence of node $i$ on node $j$ is the opposite of the one of node $j$ on node $i$, both are stored in $\Lambda_{i \sim j}^t$. The matrix of influence updates $C^T \nabla F^*(C\Lambda^t)$ thus has one row for each edge, the update can be written as \eqref{eq:dualGDnodes}, and the \eqref{invariant_dualGD} property is kept. This symmetry between the influence of two honest nodes $(i,j) \in \V_h$ remains when the influence update is clipped, as long as both honest neighbors clip the influence update on edge $\edij$ with the same threshold. Allowing local clipping thresholds requires storing in two different variables the influence from node $i$ on node $j$ and from node $i$ to $j$. 

To do so, we consider $\Gdir = (\V, \Edir)$ the directed version of the undirected graph $\G$, we enumerate its set of edges as $\Edir = \{\dirij\}$, and consider the \textit{directed} incidence matrix $\C^T \in \R^{  \Edir \times \V }$, such that for $X \in \R^{\V\times d}$, and $\dirij\in \Edir$, $(\C^T X)_{\dirij}= X_{j,:} - X_{i,:} $. Furthermore, we define $\B \in \R^{\V\times \Edir}$ as the positive coordinates of the incidence matrix $\C$, such that for $\Lambdadir \in \R^{\Edir \times d}$ and  $j\in \V$, we have  $[\B \Lambdadir]_j = \sum_{(\dirij)\in \Edir} \Lambdadir_\dirij$. Under these notations, \cref{eq:dualGDnodes} can be written with directed edge-wise clipping thresholds $\vtau\in \R^{\Edir}$ as:
\begin{equation}
\label{eq:GDdirectedEdges}
{\Lambdadir}^{t+1} = {\Lambdadir}^t - \eta \clip({\C}^T \nabla F^*(\B {\Lambdadir}^t), \vtau^t) \implies \Ydir^{t+1} = \Ydir^t - \eta \B \clip(\Cdir^T \nabla F^*(\Ydir^t); \vtau^t).
\end{equation}
And, in the average consensus sub-case, \Cref{eq:upXACP} generalizes as
\begin{align}
    \Xdir^{t+1} &= \Xdir^t - \eta \Bdir \clip(\Cdir^T \Xdir^t; \vtau^t).
    \label{eq:upXACP_DIR}
\end{align}
\textbf{Node-wise clipping thresholds.} This formalism allows node-wise clipping thresholds as a sub-case of directed edge-wise clipping thresholds. Thus, communication algorithms with node-wise clipping thresholds as ClippedGossip in \citep{he2022byzantine} or CS-RG in \citet{gaucher2025unified} exactly correspond to \cref{eq:upXACP_DIR} for appropriate $\vtau^t$.

\begin{proposition} Denoting $M := \B^{\dagger} \C$, \cref{eq:GDdirectedEdges} can be seen as a \emph{clipped pre-conditioned} gradient descent on the dual objective $\Lambda \rightarrow F^*(C \Lambda)$:
\begin{equation}
\label{eq:preconditionned_gd}
{\Lambdadir}^{t+1} = {\Lambdadir}^t - \eta \clip(  M^T {\B}^T \nabla F^*(\B {\Lambdadir}^t), \vtau).
\end{equation}
In particular, we expect fixed points of this iteration to be minimizers of $\Lambdadir \mapsto F^*(\B \Lambdadir)$ in the absence of Byzantines.
\end{proposition}

\textbf{Loss of the invariant.} As denoted before, even in the absence of Byzantines, the \eqref{invariant_dualGD} property, i.e. $\1^T \Ydir^t = 0$, does not hold \emph{by design} anymore, since $\1^T \B \neq 0$. 

\textbf{Error metric.} The main difficulty in extending the dual approach to more generic loss functions, even with a global clipping threshold, lies in this loss of invariant. The Fenchel transform $F^*(Y_h^t)$ is no longer informative of the distance between $\nabla F^*(Y_h^t)$ and the solution of the problem \eqref{eq:pb_primal} when $\1^TY_h^t \neq 0$. Local clipping analyses such as F-RG bypass this difficulty by directly proving a linear decrease for the \textit{variance} of honest nodes.

\textbf{Lack of Lyapunov function} Seeing $\cref{eq:GDdirectedEdges}$ as noisy variation of $\cref{eq:dualGDnodes}$ leads to writing $\Ydir^{t+1} = \Ydir^t - \eta L \nabla F^*(\Ydir^t) + \eta E^t$, where $E^t$ is the error induced by both Byzantine corruption and clipping of honest edges. Note that $E^t$ may lie in the kernel of $L$. If it is possible to use F-RG analysis to upper-bound the error as $\|E^t\|^2 \le \rho b \|\sqrt{L}\nabla F^*(\Ydir^t_h)\|^2$, it is much more involved to find a proper Lyapunov function to leverage this bound. One can consider $F^*(PY^t)$, with $P$ being the projection on $\vect{1}^{\perp}$. However, at some point it is necessary to control $P\left(\nabla F^*(\Ydir^t_h) -\nabla F^*(P\Ydir^t_h)\right)$, which leads to vacuous results beyond the average consensus case. 

\textbf{Conclusion.} In summary, the strength of the dual approach generally lies in its structure, which usually allows for building Lyapunov functions in a principled manner. The Byzantine case breaks this structure, which significantly hardens the analysis.

\section{Experiments}

\label{section::experiments}

We perform all experiments on a `Two Worlds' graph with 64 nodes, made of two cliques of 32 honest nodes. Each node is connected to 16 nodes in the other clique. On such a graph, the theoretical maximal number of tolerated Byzantine numbers per honest node is 16. The actual number of Byzantine neighbors per honest node depends on the experiment. We initialize the parameter of each honest node using a $N(0,I_5)$ distribution, each experimental configuration is run with 5 different seeds, and results are averaged. We implement ALIE \citep{baruch2019little}, FOE \citep{xie2020fall}, Dissensus \citep{he2022byzantine} and Spectral Heterogeneity \citep{gaucher2025unified}, more details are available in \cref{app:attacks}. We compare 1) the Global Clipping rule, the threshold is the largest clipping threshold verifying the GCR, 2) Local Clipping denotes the algorithm CS-RG \citep{gaucher2025unified}, 3) Local Trimming denotes GTS-RG, i.e. the implementation of NNA \citep{farhadkhani2022byzantine} in the case of sparse communication networks \citep{gaucher2025unified}.

\begin{figure}[h]
\centering
\includegraphics[width=0.9\textwidth]{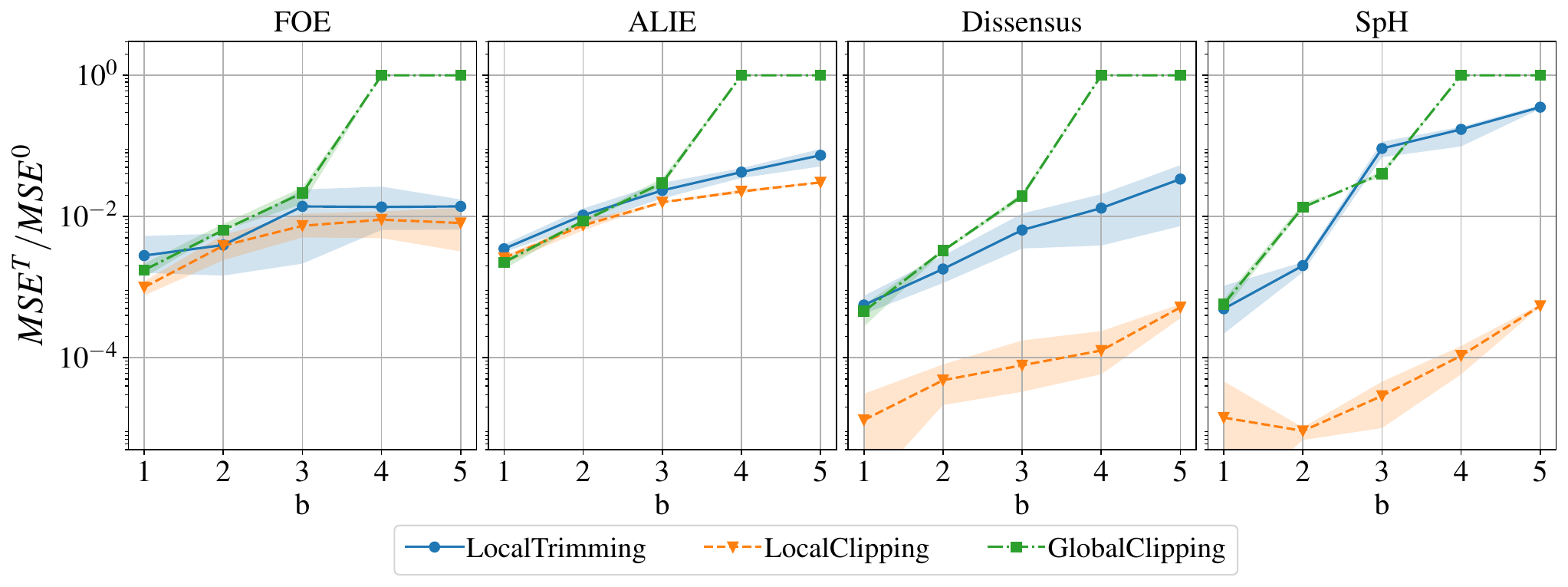}
\caption{Relative mean square error after $T=100$ communication steps, with a varying number of Byzantine neighbors to each honest node.}
\label{plot::varying_nb_byzantine}
\end{figure}

\textbf{Analysis.} We note on \cref{plot::varying_nb_byzantine} that the GCR has similar \textit{worst-case}\footnote{Note that the empirical worst-case performance among all attacks is the meaningful evaluation of the resilience of an algorithm, and thus the superior performance of LocalClipping against Dissensus and SpH only denotes that these attacks are not optimal against LocalClipping, and not that LocalClipping has superior robustness.} performances on the investigated task than methods with local aggregation rules (i.e. GTS-RG and CS-RG). 
However, as soon as $\Delta_{\infty}\ge 1$, the GCR does not allow communication between nodes anymore, and we have $\mathrm{MSE}^T/\mathrm{MSE}^0 = 1$. This follows from the definition of the GCR itself: by design, the GCR is built such as to \textit{ensure that nodes benefit from the communication}. As such, when there is no guarantee that this is the case anymore, the algorithms prevent communication between nodes. On the contrary, node-wise aggregation methods are provably robust up to a certain amount of Byzantines but do not have any guarantees beyond this breakdown point, and they do \textit{increase} the mean square error by communicating -- see e.g. \citet{gaucher2025unified}.

\section{Conclusion}
In this paper, we leverage the dual approach to design a byzantine-robust algorithm, which we then analyze in the average consensus case. We leverage the dual approach to re-interpret existing efficient robust communication algorithms, and we experimentally compare our algorithms with them. Finally, we highlight that the usual strength of the dual approach to decentralized learning - its structure - is broken by the introduction of Byzantine nodes, which brings significant challenges to the analysis. Yet, interesting algorithms arise, which do not ensure consensus among nodes (while standard algorithms do, at the price of some bias). Finding new ways of better characterizing the convergence properties of such algorithms is an interesting open problem.

\section{Statement of Broader Impact}

\bibliography{biblio}
\bibliographystyle{tmlr}

\appendix

\section{Experimental design} 
\label{section::graph_attacks}

{

\subsection{Byzantine attacks in the federated case}
\label{app:attacks}

In our experiments, we implement SOTA attacks developed for the federated SGD setting: \textit{Fall of empire} (FOE) from \citet{xie2020fall} and \textit{A little is enough} (ALIE) from \citet{baruch2019little}, \textit{Dissensus} from \citet{he2022byzantine} and \textit{Spectral Heterogeneity} for \citet{gaucher2025unified}. We implement all these methods by making a Byzantine node $j \in \V_b$ declare to an honest neighbor $i \in \V_h$ a vector $x_j^t = x_i^t + \eps^t_t a_i^t$, where $a_i^t\in \R^d$ is the attack direction on node $i$ and $\eps^t_i \ge 0$ is the scale of the attack. 
 
 \begin{itemize}
    \item \textbf{ALIE}. The Byzantine nodes compute the coordinate-wise standard deviation $\sigma^t$. Then they use $a^t_i = \sigma^t$.
    \item \textbf{FOE}. The Byzantine nodes compute the mean of the honest parameters $\overline{x^t}$, and declare $a^t_i = \overline{x^t}$.
    \item \textbf{Dissensus}. The Byzantines take $a^t_i = \sum_{k \sim j, \: k \in \V_h} x_i^t - x_k^t$.
    \item \textbf{Spectral Heterogeneity}. The Byzantines compute the Fiedler vector of the honest subgraph $e_{fied}$, i.e. the eigenvector associated with the second smallest eigenvalue of the Laplacian of the honest subgraph. Then, they take $a_i^t = [e_{fied}]_i \sum_{k \in \V_h } [e_{fied}]_k x_k^t$.
\end{itemize}

In all attacks we chose $\eps_i^t$ to maximize the error induced by Byzantines: for trimming we take it such that all Byzantine attacks are just below the trimming threshold, while for clipping strategies we chose $e_i^t >> 1$.

\section{Proofs}
\label{app:proof}

\subsection{Notations}
\label{app:notations}
In this subsection we resume the different notations of the proofs. \Cref{tab:notations} summarizes notations used for primal and dual parameters, as well as Lagrange multipliers.

\begin{table}[h]
    \centering
    \caption{Summary of notations}
    \label{tab:notations}
    \begin{tabular}{llll}
    \toprule
    Variable & Name  & Size  & Rows\\
    \midrule
         $X$ &  primal parameter        &  $\V\times d$ & $(x_i)_{i\in\V}$\\
         $Y$ &  dual parameter          & $\V \times d$ & $(y_i)_{i\in\V}$\\
         $\Lambda$ & Lagrange multipliers & $\E\times d$& $(\lambda_\edij)_{\edij\in\E}$\\
         $C^T \nabla F^* (C\Lambda)$ & Influence update on the edges & $\E\times d$& $(x_i-x_j)_{\edij\in\E}$\\
    \bottomrule
    \end{tabular}    
\end{table}
We recall the algorithm in the global clipping setting:
\begin{align*}
    X_h^{t+1} &= X_h^t - \eta (C_h \, C_b)\clip(C^T X^t; \vtau^t)\\
    X_b^{t+1} &= *.
\end{align*}
In the global clipping setting, we consider that, as $\vtau^t = \tau^t \1_{\E}$, we can overload the clipping operators writing for $\Lambda_h \in \R^{\E_h \times d}$ and $\Lambda_b \in \R^{\E_b \times d}$,
\begin{align}
    \clip(\Lambda_h;\tau^t) = [\clip(\lambda_{\edij}, \tau^t)]_{\edij \in \E_h}\\
    \clip(\Lambda_b;\tau^t) = [\clip(\lambda_{\edij}, \tau^t)]_{\edij \in \E_b}
\end{align}

We use the following notations:

\begin{itemize}
    \item For any matrix $M \in \R^{\V_h \times d}$, we denote
    \[
    \overline{M}:= \frac{1}{n_h} \1_{n_h} \1_{n_h}^T M.
    \]

    \item We note the primal parameter re-centered on the solution
    \[
    Z_h^t := X_h^t - \overline{X^*_h}.
    \]
    \item The unitary matrix of attack on edges linking to Byzantine nodes
    \[
    U_b^t := \frac{1}{\eta \tau^t}(\Lambda_{b}^{t+1} - \Lambda_b^t).
    \]
    Thus $\|U_b^t\|_{\infty,2} \le 1$ and $\Lambda_{b}^{t+1} = \Lambda_b^t - \eta \clip([C^TX^t]_b;\tau^t) = \Lambda_b^t + \eta \tau^t U_b^t$.
    \item The error term induced by clipping and Byzantine nodes
    \[
    E^t := C_h \left(C_h^T Z^t_h - \clip(C_h^T Z^t_h; \tau^t) \right) + \tau^t C_b U_b^t,
    \]
    \item We denote for $p\in\{1,2\}$ and $\kappa^t\in \{0,\ldots,|\E_h|\}$
    \begin{align*}
     \|C_h^T X_h^t\|_{p;{\kappa^t}}^p :=& \sum_{\substack{\edij \in \E_h, \ \  \kappa^t \text{ largest}}} \|(C_h^T X_h^t)_{\edij}\|_2^p\\
     \|C_h^T X_h^t\|_{p; {-\kappa^t}}^p :=& \|C_h^T X_h^t\|_p^p - \|C_h^T X_h^t\|_{p;\kappa^t}^p.
\end{align*}
    \item We recall that by default $\|\cdot\|$ and $\|\cdot\|_2$ is the usual Euclidean norm (or the Frobenius norm in case of matrix).
\end{itemize}

And we recall some previous facts:
\begin{itemize}[noitemsep]
    \item $L_h = C_h C_h^T = \frac{1}{2}\Cdir_h \Cdir_h^T$ the Lagrangian matrix on the honest subgraph $\G_h$.
    \item $C_h^T\1_{n_h} = 0 \implies C_h^T \overline{X^t_h} = 0 \implies C_h^T X_h^t = C_h^T Z_h^t$ .
    \item $C_h^{\dagger}$ is the Moore-Penrose pseudo inverse of $C_h$.
\end{itemize}

\subsection{Analysis of Global Clipping Rule (GCR) \cref{thm::Descent global clipping}}
\label{app:proof_Thm_GCR}

\begin{lemma}(Gossip - Error decomposition)
    \label{proof:lemma1global}
    The iterate update can be decomposed as 
    \[
    Z^{t+1}_h = (I_h - \eta L_h)Z_h^t + \eta E^t.
    \]
\end{lemma}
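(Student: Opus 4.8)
The plan is to start from the update equation for the honest block in the ACP setting and massage it into the claimed form by adding and subtracting the unclipped honest flow. Recall from \Cref{eq::EdOp_AvgConsPb_X} that
\[
X_h^{t+1} = X_h^t - \eta\,(C_h\ C_b)\,\clip(C^T X^t;\vtau^t),
\]
and that in the global clipping case $\vtau^t = \tau^t\1_{\E}$, so the clipping acts block-wise: $\clip(C^T X^t;\tau^t) = \bigl(\clip((C^TX^t)_h;\tau^t),\ \clip((C^TX^t)_b;\tau^t)\bigr)$. First I would use the block decomposition $(C^TX^t)_h = C_h^T X_h^t$ and the fact that $C_h^T\1_{n_h}=0$, which gives $C_h^T X_h^t = C_h^T Z_h^t$ where $Z_h^t := X_h^t - \overline{X_h^*}$; since $\overline{X_h^*}$ is constant in $t$, subtracting it from both sides of the update turns the left-hand side into $Z_h^{t+1}$ and leaves the right-hand side unchanged except for replacing $X_h^t$ by $Z_h^t$ inside the honest flow.

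Next I would handle the honest contribution $-\eta\,C_h\,\clip(C_h^T Z_h^t;\tau^t)$ by writing $\clip(C_h^T Z_h^t;\tau^t) = C_h^T Z_h^t - \bigl(C_h^T Z_h^t - \clip(C_h^T Z_h^t;\tau^t)\bigr)$, so that $-\eta C_h \clip(C_h^T Z_h^t;\tau^t) = -\eta C_h C_h^T Z_h^t + \eta C_h\bigl(C_h^T Z_h^t - \clip(C_h^T Z_h^t;\tau^t)\bigr)$. Using $C_h C_h^T = L_h$ (recalled in the notation section) this produces the term $-\eta L_h Z_h^t$, i.e.\ the $(I_h - \eta L_h)Z_h^t$ part once the leading $Z_h^t$ is accounted for, plus the first summand of $E^t$.

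For the Byzantine contribution, I would use the definition $U_b^t := \tfrac{1}{\eta\tau^t}(\Lambda_b^{t+1}-\Lambda_b^t)$ together with $\Lambda_b^{t+1} = \Lambda_b^t - \eta\,\clip((C^TX^t)_b;\tau^t)$, which gives $-\eta\,\clip((C^TX^t)_b;\tau^t) = \eta\tau^t U_b^t$, hence $-\eta C_b \clip((C^TX^t)_b;\tau^t) = \eta\tau^t C_b U_b^t$, the second summand of $E^t$. Collecting the three pieces yields $Z_h^{t+1} = Z_h^t - \eta L_h Z_h^t + \eta\bigl(C_h(C_h^T Z_h^t - \clip(C_h^T Z_h^t;\tau^t)) + \tau^t C_b U_b^t\bigr) = (I_h - \eta L_h)Z_h^t + \eta E^t$, as claimed. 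This is essentially a bookkeeping argument; the only mild subtlety — and the one place to be careful — is the edge case $\tau^t=0$, where $U_b^t$ is defined by a $0/0$ expression and one should instead note directly that $\clip(\cdot;0)=0$ so the whole flow vanishes and the decomposition still holds with $E^t = C_h C_h^T Z_h^t = L_h Z_h^t$ (consistent with the formula by continuity). I do not expect any real obstacle here; it is a definitional unwinding.
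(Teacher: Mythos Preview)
Your proposal is correct and follows essentially the same route as the paper: start from the honest update, use $C_h^T X_h^t = C_h^T Z_h^t$, add and subtract the unclipped honest flow to isolate $(I_h-\eta L_h)Z_h^t$, and rewrite the Byzantine contribution as $\tau^t C_b U_b^t$. Your extra remark on the $\tau^t=0$ edge case is a harmless addition that the paper simply omits.
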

\begin{proof}
    The actualization of $X_h^t$ writes
    \begin{align*}
        X^{t+1}_h &= X_h^t - \eta C_h \clip(C_h^T X^t_h; \tau^t) -\eta C_b \clip([C^TX^t]_b; \tau^t).
    \end{align*}
    Using $C_h^TX_h^t = C_h^T Z_h^t$, we get
    \begin{align*}
        Z^{t+1}_h &= Z_h^t - \eta C_h \clip(C_h^T X^t_h; \tau^t) +\eta \tau^t C_b U_b^t \\
        &= (I_h - \eta C_hC_h^T)Z_h^t + \eta \left\{ C_h\left(C_h^T Z^t_h - \clip(C_h^T Z^t_h; \tau^t) \right) + \tau^t C_b U_b^t \right\}\\
        &= (I_h - \eta L_h)Z_h^t + \eta E^t.
    \end{align*}
\end{proof}

\begin{lemma}(Sufficient decrease on the variance)
    \label{proof:lemma2global}
    We have the decomposition of the variance as a biased gradient descent:
    \[
    \|Z^{t+1}_h-\overline{Z_h^t}\|^2 =\|Z_h^t-\overline{Z_h^t}\|^2 + \eta^2\|L_h Z_h^t - E^t\|^2 - 2 \eta \| C_h^T Z_h^t\|^2 + 2\eta \langle Z_h^t - \overline{Z_h^t}, E^t \rangle.
    \]
\end{lemma}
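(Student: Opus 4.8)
The plan is to expand the squared Frobenius norm $\|Z^{t+1}_h - \overline{Z_h^t}\|^2$ directly using the decomposition from \Cref{proof:lemma1global}, namely $Z^{t+1}_h = (I_h - \eta L_h)Z_h^t + \eta E^t$. First I would write $Z^{t+1}_h - \overline{Z_h^t} = (Z_h^t - \overline{Z_h^t}) - \eta L_h Z_h^t + \eta E^t$, using that $L_h \overline{Z_h^t} = 0$ (since $C_h^T \1_{n_h} = 0$ implies $L_h \1_{n_h} = 0$, so $L_h Z_h^t = L_h(Z_h^t - \overline{Z_h^t})$ does not matter here, but more directly $L_h Z_h^t$ already has zero row-average). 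Then expanding the square gives three groups of terms: $\|Z_h^t - \overline{Z_h^t}\|^2$, the cross terms $-2\eta\langle Z_h^t - \overline{Z_h^t}, L_h Z_h^t\rangle + 2\eta\langle Z_h^t - \overline{Z_h^t}, E^t\rangle$, and the quadratic term $\eta^2\|L_h Z_h^t - E^t\|^2$ (grouping $-\eta L_h Z_h^t + \eta E^t = -\eta(L_h Z_h^t - E^t)$).

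The key simplification is the first cross term: $\langle Z_h^t - \overline{Z_h^t}, L_h Z_h^t\rangle$. Since $L_h = C_h C_h^T$ is symmetric and $L_h \1_{n_h} = 0$, we have $\langle \overline{Z_h^t}, L_h Z_h^t\rangle = \langle Z_h^t, L_h \overline{Z_h^t}\rangle = 0$, so $\langle Z_h^t - \overline{Z_h^t}, L_h Z_h^t\rangle = \langle Z_h^t, L_h Z_h^t\rangle = \langle C_h^T Z_h^t, C_h^T Z_h^t\rangle = \|C_h^T Z_h^t\|^2$. This produces the $-2\eta\|C_h^T Z_h^t\|^2$ term. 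The remaining cross term $2\eta\langle Z_h^t - \overline{Z_h^t}, E^t\rangle$ is already in the desired form, and the quadratic term $\eta^2\|L_h Z_h^t - E^t\|^2$ is left untouched.

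This lemma is essentially a routine algebraic expansion; I do not anticipate a genuine obstacle, only bookkeeping. The one point requiring a little care is justifying that the various projections onto the mean ($\overline{\cdot}$) interact correctly with $L_h$ and $C_h^T$ — specifically that $C_h^T Z_h^t = C_h^T X_h^t$ (already recalled in the notation section) and that $L_h$ annihilates constant-row matrices, so that cross terms involving $\overline{Z_h^t}$ against $L_h Z_h^t$ vanish. Assembling these observations yields exactly the stated identity.
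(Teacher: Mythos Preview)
Your proposal is correct and follows essentially the same approach as the paper's own proof: start from \Cref{proof:lemma1global}, write $Z^{t+1}_h - \overline{Z_h^t} = (Z_h^t - \overline{Z_h^t}) - \eta(L_h Z_h^t - E^t)$, expand the square, and simplify the cross term $\langle Z_h^t - \overline{Z_h^t}, L_h Z_h^t\rangle = \|C_h^T Z_h^t\|^2$ using $L_h = C_h C_h^T$ and $C_h^T \overline{Z_h^t} = 0$. The paper's write-up is slightly more compact but the logic and the key identities are identical.
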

\begin{proof}
    we start from \cref{proof:lemma1global}
    \begin{align*}
        \|Z^{t+1}_h - \overline{Z^{t}_h}\|^2 &=
        \|Z^{t}_h - \overline{Z^{t}_h} - \eta (L_h Z_h^t -E^t)\|^2\\
        &= \|Z^{t}_h - \overline{Z^{t}_h}\|^2 + \eta^2 \|L_h Z_h^t -E^t\|^2 - 2\eta \langle L_h Z_h^t, Z_h^t - \overline{Z^{t}_h}\rangle + 2 \eta \langle Z_h^t - \overline{Z^{t}_h}, E^t \rangle\\
        &= \|Z^{t}_h - \overline{Z^{t}_h}\|^2 + \eta^2 \|L_h Z_h^t -E^t\|^2 - 2\eta \| C_h^T Z_h^t\|^2 + 2 \eta \langle Z_h^t - \overline{Z^{t}_h}, E^t \rangle,
    \end{align*}
    where we used that $C_h^T \overline{Z_h^t} = 0\text{ and } C_hC_h^T = L_h.$
\end{proof}
\begin{lemma}(complete sufficient decrease)
    \label{proof:lemma3global}
    \[
    \|Z^{t+1}_h\|^2 =\|Z_h^t\|^2 + \eta^2\|L_h Z_h^t - E^t\|^2 - 2 \eta \| C_h^T Z_h^t\|^2 + 2\eta \langle Z_h^t - \overline{Z_h^t}, E^t \rangle + 2 \langle \overline{Z^t_h}, \overline{Z^{t+1}_h} - \overline{Z^t_h} \rangle.
    \]
\end{lemma}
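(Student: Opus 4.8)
\textbf{Plan for Lemma (complete sufficient decrease, \cref{proof:lemma3global}).}

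The plan is to write the squared norm $\|Z_h^{t+1}\|^2$ using the orthogonal decomposition $Z_h^t = (Z_h^t - \overline{Z_h^t}) + \overline{Z_h^t}$, which is valid because $\overline{Z_h^t}$ lies in the span of $\1_{n_h}$ and $Z_h^t - \overline{Z_h^t}$ is orthogonal to it (its rows sum to zero). The key observation is that this same orthogonality persists at step $t+1$: since the update from \cref{proof:lemma1global} is $Z_h^{t+1} = (I_h - \eta L_h)Z_h^t + \eta E^t$, and $L_h \1_{n_h} = 0$, the component of $Z_h^{t+1}$ along $\1_{n_h}$ is controlled only by the part of $E^t$ along $\1_{n_h}$, but crucially the \emph{full} decomposition $\|Z_h^{t+1}\|^2 = \|Z_h^{t+1} - \overline{Z_h^{t+1}}\|^2 + \|\overline{Z_h^{t+1}}\|^2$ holds by Pythagoras regardless.

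Concretely, I would proceed as follows. First, apply Pythagoras at time $t+1$: $\|Z_h^{t+1}\|^2 = \|Z_h^{t+1} - \overline{Z_h^{t+1}}\|^2 + \|\overline{Z_h^{t+1}}\|^2$. Second, substitute the expression for $\|Z_h^{t+1} - \overline{Z_h^{t+1}}\|^2$ provided by \cref{proof:lemma2global}; note that $\overline{Z_h^{t+1}} = \overline{Z_h^t} + (\overline{Z_h^{t+1}} - \overline{Z_h^t})$, so by Pythagoras again (or simply expanding the square) $\|\overline{Z_h^{t+1}}\|^2 = \|\overline{Z_h^t}\|^2 + 2\langle \overline{Z_h^t}, \overline{Z_h^{t+1}} - \overline{Z_h^t}\rangle + \|\overline{Z_h^{t+1}} - \overline{Z_h^t}\|^2$. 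Third, wait --- the statement to be proven has no $\|\overline{Z_h^{t+1}} - \overline{Z_h^t}\|^2$ term, so I should instead keep $\|\overline{Z_h^{t+1}}\|^2$ only to second order or, more likely, the intended identity uses $\|\overline{Z_h^{t+1}}\|^2 = \|\overline{Z_h^t}\|^2 + 2\langle \overline{Z_h^t}, \overline{Z_h^{t+1}} - \overline{Z_h^t}\rangle + \|\overline{Z_h^{t+1}} - \overline{Z_h^t}\|^2$ and the last term gets absorbed: indeed $\overline{Z_h^{t+1}} - \overline{Z_h^t} = \eta\,\overline{E^t}$ (since $\overline{L_h Z_h^t} = 0$), and $\|\eta \overline{E^t}\|^2$ combines with the cross terms from $\|L_h Z_h^t - E^t\|^2$ and $\langle Z_h^t - \overline{Z_h^t}, E^t\rangle$. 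I would verify the bookkeeping: $\|L_h Z_h^t - E^t\|^2 = \|L_h Z_h^t - (E^t - \overline{E^t})\|^2 - 2\eta^{-1}\langle \cdots\rangle + \|\overline{E^t}\|^2$ --- on reflection the cleanest route is simply to add $\|Z_h^{t+1} - \overline{Z_h^{t+1}}\|^2$ from \cref{proof:lemma2global} to $\|\overline{Z_h^{t+1}}\|^2 = \|\overline{Z_h^t}\|^2 + 2\langle \overline{Z_h^t}, \overline{Z_h^{t+1}} - \overline{Z_h^t}\rangle + \|\overline{Z_h^{t+1}} - \overline{Z_h^t}\|^2$, observe $\|Z_h^t\|^2 = \|Z_h^t - \overline{Z_h^t}\|^2 + \|\overline{Z_h^t}\|^2$, and check that the residual terms $\|\overline{Z_h^{t+1}} - \overline{Z_h^t}\|^2$ is exactly accounted for --- possibly the stated lemma simply drops a nonnegative term or it is exactly absorbed; a careful expansion with $E^t = (E^t - \overline{E^t}) + \overline{E^t}$ throughout \cref{proof:lemma2global}'s identity will reveal which.

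The main obstacle is purely the algebraic bookkeeping: ensuring that every cross term and square involving $\overline{E^t}$ (equivalently $\overline{Z_h^{t+1}} - \overline{Z_h^t}$) that appears when I split off the $\1_{n_h}$-component is correctly matched on both sides of the claimed identity. Since all the maps involved ($L_h$, projection onto $\1_{n_h}^\perp$) commute with the relevant projections and $L_h$ annihilates constants, I expect no conceptual difficulty --- just the need to expand $\|Z_h^{t+1}\|^2$ once directly from \cref{proof:lemma1global} and regroup, rather than chaining the two previous lemmas, if the chaining produces a stray term. I would present the proof as a single expansion of $\|Z_h^{t+1}\|^2 = \|(I_h - \eta L_h)Z_h^t + \eta E^t\|^2$, using $\langle (I_h - \eta L_h)Z_h^t, E^t\rangle = \langle Z_h^t, E^t\rangle - \eta\langle L_h Z_h^t, E^t\rangle$ and $\langle Z_h^t, E^t\rangle = \langle Z_h^t - \overline{Z_h^t}, E^t\rangle + \langle \overline{Z_h^t}, E^t\rangle$, then identify $\eta\langle \overline{Z_h^t}, E^t\rangle$ with $\langle \overline{Z_h^t}, \overline{Z_h^{t+1}} - \overline{Z_h^t}\rangle$ and the remaining quadratic terms with $\eta^2\|L_h Z_h^t - E^t\|^2$ after completing the square, finally using $\|Z_h^t\|^2 = \|Z_h^t - \overline{Z_h^t}\|^2 + \|\overline{Z_h^t}\|^2$ together with \cref{proof:lemma2global} to match the target expression.
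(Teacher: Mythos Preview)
Your direct-expansion fallback at the end is correct and yields the identity exactly as stated: expand $\|(I_h-\eta L_h)Z_h^t+\eta E^t\|^2$, complete the square to produce $\eta^2\|L_hZ_h^t-E^t\|^2$, split $\langle Z_h^t,E^t\rangle=\langle Z_h^t-\overline{Z_h^t},E^t\rangle+\langle\overline{Z_h^t},E^t\rangle$, and use $\eta\,\overline{E^t}=\overline{Z_h^{t+1}}-\overline{Z_h^t}$ to identify $\eta\langle\overline{Z_h^t},E^t\rangle$ with the last term. No terms are dropped and nothing needs to be ``absorbed''.

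The confusion in your first two attempts comes from misreading \cref{proof:lemma2global}: it gives an exact expression for $\|Z_h^{t+1}-\overline{Z_h^{t}}\|^2$, centred at time~$t$, not for $\|Z_h^{t+1}-\overline{Z_h^{t+1}}\|^2$. Once you read it correctly, the paper's proof is a three-line shortcut that avoids your bookkeeping entirely: write
\[
\|Z_h^{t+1}\|^2=\|Z_h^{t+1}-\overline{Z_h^{t}}\|^2+\|\overline{Z_h^{t}}\|^2+2\langle\overline{Z_h^{t}},Z_h^{t+1}-\overline{Z_h^{t}}\rangle,
\]
observe $\langle\overline{Z_h^{t}},Z_h^{t+1}-\overline{Z_h^{t}}\rangle=\langle\overline{Z_h^{t}},\overline{Z_h^{t+1}}-\overline{Z_h^{t}}\rangle$ (since $\overline{Z_h^t}$ lies in $\mathrm{span}(\1_{n_h})$), plug in \cref{proof:lemma2global} for the first term, and use Pythagoras $\|Z_h^t-\overline{Z_h^t}\|^2+\|\overline{Z_h^t}\|^2=\|Z_h^t\|^2$. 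No stray $\|\overline{Z_h^{t+1}}-\overline{Z_h^{t}}\|^2$ ever appears because you never center at $t{+}1$. Your direct expansion and the paper's chaining are algebraically equivalent; the paper's route is just shorter because it reuses \cref{proof:lemma2global} as written.
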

\begin{proof}
    We leverage that 
    \begin{align*}
    \|Z^{t+1}_h\|^2 &= \|Z^{t+1}_h - \overline{Z^t_h} + \overline{Z^t_h}\|^2\\
    &= \|Z^{t+1}_h - \overline{Z^t_h}\|^2 + \|\overline{Z^t_h}\|^2
    +2 \langle \overline{Z^t_h}, Z^{t+1}_h - \overline{Z^t_h} \rangle\\
    &= \|Z^{t+1}_h - \overline{Z^t_h}\|^2 + \|\overline{Z^t_h}\|^2
    +2 \langle \overline{Z^t_h}, \overline{Z^{t+1}_h} - \overline{Z^t_h} \rangle.
    \end{align*}
    Then we use \cref{proof:lemma2global}.
\end{proof}

\begin{assumption} \cref{asmpt::condition_on_clipped_edges}
    Trade-off between clipping error and Byzantine influence. By denoting
    \begin{equation*}
            \Delta_{\infty} := \sup_{\substack{U_b \in \R^{\E_b \times d} \\ \|U_b\|_{\infty,2}\le 1}} \|C_h^{\dagger}C_b U_b\|_{\infty,2},
    \end{equation*} 
    We assume that
    $\tau^t = 0$, or
    \[
    \|C_h^T X^t_h\|_{\kappa^t} \ge  
    \Delta_{\infty}\|C_h^T X^t_h\|_{1} + \eta \frac{|\E_b|^2}{n_h} \sum_{0\le s \le t} \tau^s.
    \]  
\end{assumption}

\begin{lemma}(Join control of the first order error and second order bias)
    \label{proof:lemma4global}
    Under \cref{asmpt::condition_on_clipped_edges}, we have that
    \[
    \eta \langle Z_h^t - \overline{Z^t_h}, E^t \rangle + 
    \langle \overline{Z^t_h}, \overline{Z^{t+1}_h} - \overline{Z^t_h} \rangle + \eta^2 \|\overline{E^t}\|_2^2
    \le  \eta \|C_h^TZ_h^t\|^2_{\kappa^t},
    \]
    or $\tau^t = 0$.
\end{lemma}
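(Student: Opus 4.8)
The goal is to bound the sum $\eta \langle Z_h^t - \overline{Z^t_h}, E^t\rangle + \langle \overline{Z^t_h}, \overline{Z^{t+1}_h}-\overline{Z^t_h}\rangle + \eta^2\|\overline{E^t}\|_2^2$ by $\eta\|C_h^T Z_h^t\|^2_{\kappa^t}$. The plan is to expand $E^t$ into its two pieces — the clipping defect $D^t := C_h(C_h^T Z_h^t - \clip(C_h^T Z_h^t;\tau^t))$ and the Byzantine term $B^t := \tau^t C_b U_b^t$ — and to handle each inner product separately, repeatedly using the key structural fact that rows of $C_h^T Z_h^t - \clip(C_h^T Z_h^t;\tau^t)$ are supported only on the $\kappa^t$ largest honest edges and have $2$-norm at most $\|(C_h^T Z_h^t)_{(i\sim j)}\|_2$ on those edges.

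First I would handle the first-order term $\eta\langle Z_h^t - \overline{Z^t_h}, E^t\rangle$. Write $\langle Z_h^t - \overline{Z^t_h}, D^t\rangle = \langle C_h^T Z_h^t, C_h^T Z_h^t - \clip(C_h^T Z_h^t;\tau^t)\rangle$ (using $C_h^T\overline{Z^t_h}=0$), and observe that on each clipped edge the row of $C_h^T Z_h^t$ and the row of its clipping defect are positively aligned, so this inner product is exactly $\sum_{\text{clipped } (i\sim j)} \|(C_h^T Z_h^t)_{(i\sim j)}\|_2\big(\|(C_h^T Z_h^t)_{(i\sim j)}\|_2 - \tau^t\big) \le \|C_h^T Z_h^t\|^2_{2;\kappa^t} \le \|C_h^T Z_h^t\|^2_{\kappa^t}$ — this is where the right-hand side $\eta\|C_h^T Z_h^t\|^2_{\kappa^t}$ comes from. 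For the Byzantine part $\langle Z_h^t-\overline{Z^t_h},\tau^t C_b U_b^t\rangle = \tau^t\langle C_h^T Z_h^t, C_h^\dagger C_b U_b^t\rangle$ (inserting $C_h C_h^\dagger$ acting as the identity on $\mathrm{range}(C_h^T)$), bound by $\tau^t\|C_h^T Z_h^t\|_{1}\,\|C_h^\dagger C_b U_b^t\|_{\infty,2} \le \tau^t\Delta_\infty\|C_h^T Z_h^t\|_1$, and similarly the analogous $D^t$-against-$C_b U_b^t$ cross terms.

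Next I would bound the remaining second-order terms. Since $\overline{Z^{t+1}_h}-\overline{Z^t_h} = \eta\,\overline{E^t}$ (from Lemma~\ref{proof:lemma1global}, as $\overline{L_h Z_h^t}=0$), the term $\langle\overline{Z^t_h},\overline{Z^{t+1}_h}-\overline{Z^t_h}\rangle + \eta^2\|\overline{E^t}\|_2^2$ involves $\overline{E^t}$, which telescopes: $\overline{D^t} = n_h^{-1}\1\1^T C_h(\dots) = 0$ because $\1^T C_h = 0$, so $\overline{E^t} = \tau^t\,\overline{C_b U_b^t}$, with Frobenius norm at most $\tau^t\,n_h^{-1}\cdot|\E_b|\cdot 1$-type bound — more precisely $\|\overline{C_b U_b^t}\|_F \le |\E_b|/\sqrt{n_h}$ times $\|U_b^t\|_{\infty,2}$, giving $\|\overline{E^t}\|_2 \le \tau^t|\E_b|/\sqrt{n_h}$. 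Here the crucial point is that $\overline{Z^t_h}$ itself is controlled by the \emph{accumulated} bias from all previous steps: since $\overline{Z^0_h}=0$ (as $\overline{X^0_h}=\overline{X^*_h}$... or whatever the initialization gives) and $\overline{Z^{s+1}_h}-\overline{Z^s_h}=\eta\overline{E^s}$, we get $\|\overline{Z^t_h}\|_2 \le \eta\sum_{s<t}\|\overline{E^s}\|_2 \le \eta\frac{|\E_b|}{\sqrt{n_h}}\sum_{s<t}\tau^s$. Combining, $\langle\overline{Z^t_h},\eta\overline{E^t}\rangle + \eta^2\|\overline{E^t}\|^2 \le \eta\|\overline{Z^t_h}\|\cdot\|\overline{E^t}\| + \eta^2\|\overline{E^t}\|^2 \le \eta\cdot\eta\frac{|\E_b|^2}{n_h}\tau^t\sum_{s\le t}\tau^s$, matching exactly the second term in the GCR.

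Finally I would assemble: the total is at most $\eta\|C_h^T Z_h^t\|^2_{2;\kappa^t} + \eta\tau^t\Delta_\infty\|C_h^T Z_h^t\|_1 + \eta^2\frac{|\E_b|^2}{n_h}\tau^t\sum_{s\le t}\tau^s$, and I divide through by $\eta\tau^t$ (assuming $\tau^t>0$; otherwise the bound is trivial since $E^t=0$ when $\tau^t=0$... actually when $\tau^t=0$ every honest edge with nonzero flow is clipped to $0$ and $C_bU_b^t=0$, so the stated "or $\tau^t=0$" alternative is handled directly). The GCR, $\|C_h^T Z_h^t\|_{\kappa^t} \ge \Delta_\infty\|C_h^T Z_h^t\|_1 + \eta\frac{|\E_b|^2}{n_h}\sum_{s\le t}\tau^s$, is precisely what makes $\tau^t\Delta_\infty\|C_h^T Z_h^t\|_1 + \eta^2\frac{|\E_b|^2}{n_h}\tau^t\sum_{s\le t}\tau^s \le \tau^t\|C_h^T Z_h^t\|_{\kappa^t}$; together with $\|C_h^T Z_h^t\|^2_{2;\kappa^t}\le\|C_h^T Z_h^t\|^2_{\kappa^t}$... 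I would need to be careful here about whether the clipped-edge inner product should be compared to $\|\cdot\|^2_{2;\kappa^t}$ or to $\tau^t\|\cdot\|_{1;\kappa^t}$; using $\|(C_h^T Z_h^t)_{(i\sim j)}\|_2(\|(C_h^T Z_h^t)_{(i\sim j)}\|_2-\tau^t)\le \tau^t\cdot 0$ is false, so instead I bound it by $\|(C_h^T Z_h^t)_{(i\sim j)}\|_2^2$ minus a nonnegative correction, i.e. by $\|C_h^T Z_h^t\|^2_{2;\kappa^t}$, and then note $\|C_h^T Z_h^t\|^2_{2;\kappa^t}\le\|C_h^T Z_h^t\|^2_{\kappa^t}$ by the $\ell_2$-vs-$\ell_1$ interpretation — wait, the paper's $\|\cdot\|_{\kappa^t}$ without subscript $p$ presumably means the $\ell_1$-of-norms squared notation from the appendix, so I would match notation carefully against \cref{app:notations} and use whichever identity ($\|M\|_{1;\kappa}^2 \ge \|M\|_{2;\kappa}^2$ or the reverse) holds there. \textbf{The main obstacle} is precisely this bookkeeping of which norm ($\ell_1$ versus $\ell_2$ of the row-norms, over the $\kappa^t$ largest edges versus all edges) appears on each side and getting the clipped-edge inner product to be dominated by the right quantity — the probabilistic/geometric content is light, but the norm algebra must line up exactly with the GCR as stated, and with the conventions fixed in \cref{app:notations}.
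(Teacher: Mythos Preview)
Your overall structure---split $E^t$ into the clipping defect and the Byzantine part, handle the first-order and bias terms separately, telescope $\overline{Z_h^t}$ using $\overline{Z_h^0}=0$ and $\overline{Z_h^{s+1}}-\overline{Z_h^s}=\eta\tau^s\overline{C_bU_b^s}$---is exactly the paper's approach, and your bounds on the bias terms are correct and match the paper's.

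There is, however, a genuine gap in the assembly, and it is not just bookkeeping. You correctly compute
\[
\langle C_h^T Z_h^t,\, C_h^T Z_h^t-\clip(C_h^T Z_h^t;\tau^t)\rangle
=\sum_{\text{clipped }(i\sim j)}\|z_i^t-z_j^t\|_2\bigl(\|z_i^t-z_j^t\|_2-\tau^t\bigr)
=\|C_h^T Z_h^t\|_{2;\kappa^t}^2-\tau^t\|C_h^T Z_h^t\|_{1;\kappa^t},
\]
but then you immediately bound this by $\|C_h^T Z_h^t\|_{2;\kappa^t}^2$, discarding the negative term $-\tau^t\|C_h^T Z_h^t\|_{1;\kappa^t}$. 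That negative term is the whole point: it is precisely what the GCR guarantees is large enough (in absolute value) to absorb the positive Byzantine contribution $\tau^t\Delta_\infty\|C_h^T Z_h^t\|_{1}$ and the bias contribution $\eta\frac{|\E_b|^2}{n_h}\tau^t\sum_{s\le t}\tau^s$. If you keep the exact identity instead of the loose bound, the three terms combine into
\[
\|C_h^T Z_h^t\|_{2;\kappa^t}^2-\tau^t\Bigl(\|C_h^T Z_h^t\|_{1;\kappa^t}-\Delta_\infty\|C_h^T Z_h^t\|_{1}-\eta\tfrac{|\E_b|^2}{n_h}\textstyle\sum_{s\le t}\tau^s\Bigr),
\]
and the GCR says the parenthesis is nonnegative, giving exactly $\le\|C_h^T Z_h^t\|_{2;\kappa^t}^2$. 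With your premature bound, you instead end up with $\|C_h^T Z_h^t\|_{2;\kappa^t}^2+\tau^t\|C_h^T Z_h^t\|_{1;\kappa^t}$, and no $\ell_1$-versus-$\ell_2$ inequality will close that gap. (Incidentally, the lemma's $\|C_h^T Z_h^t\|_{\kappa^t}^2$ is the paper's shorthand for $\|C_h^T Z_h^t\|_{2;\kappa^t}^2$, so there is no norm ambiguity to resolve.)
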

\begin{proof}
    We begin by upper bounding precisely both error terms:
    \begin{enumerate}[wide]
        \item Variance-induced error.
\begin{align*}
    \langle Z_h^t - \overline{Z^t_h}, E^t\rangle &=
        \langle Z_h^t - \overline{Z^t_h}, C_h\left(C_h^T Z_h^t - \clip(C_h^T Z_h^t;\tau^t)\right)\rangle
        + \langle Z_h^t - \overline{Z^t_h}, \tau^t C_bU_b^t \rangle\\
        &= \langle C_h^T Z_h^t, C_h^T Z_h^t - \clip(C_h^T Z_h^t;\tau^t)\rangle
        + \langle C_h^TZ_h^t , \tau^t C_h^{\dagger}C_b U_b^t \rangle,
\end{align*}
Using that $C_h^T\overline{Z_h^t} = 0$.  On one side we have that,
\begin{align*}
\langle C_h^T Z_h^t, C_h^T Z_h^t - \clip(C_h^T Z_h^t;\tau^t)\rangle &= 
\sum_{\edij \in \E_h} \langle z_i^t - z_j^t, z_i^t - z_j^t - \clip(z_i^t - z_j^t;\tau^t) \rangle \\
&= \sum_{\edij \in \E_h} \|z_i^t - z_j^t\|_2 \left(\|z_i^t - z_j^t\|_2 - \tau^t\right)_+.
\end{align*}
One the other side, by using that, for $M,N \in \R^{\E_h \times d}$, $\langle M , N \rangle \le \|M\|_{1,2} \|N\|_{\infty,2}$, as:
\begin{align*}
    \langle M , N \rangle &= \sum_{\edij \in \E_h} \langle M_{\edij}, N_{\edij}\rangle\\
    &\le \sum_{\edij \in \E_h} \|M_{\edij}\|_2 \|N_{\edij}\|_2\\
    &\le \|N\|_{\infty,2} \sum_{\edij \in \E_h} \|M_{\edij}\|_2,
\end{align*}
we can upper bound the second term
\[
\langle C_h^TZ_h^t , \tau^t C_h^{\dagger}C_b U_b^t \rangle \le \tau^t \|C_h^TZ_h^T\|_{1,2} \|C_h^{\dagger}C_b U_b^t\|_{\infty,2}.
\]
Then by defining $\Delta_{\infty} = \sup_{U_b \in \R^{\E_b \times d}} \|C_h^{\dagger}C_b U_b^t\|_{\infty,2}$, using $\|C_h^TZ_h^t\|_{1,2} = \sum_{\edij \in \E_h} \|z_i^t - z_j^t\|_2$, we have that
\begin{align*}
    \langle Z_h^t - \overline{Z^t_h}, E^t\rangle &\le 
        \sum_{\edij \in \E_h} \|z_i^t - z_j^t\|_2 \left(\|z_i^t - z_j^t\|_2 - \tau^t\right)_+ + \tau^t \Delta_{\infty} \sum_{\edij \in \E_h} \|z_i^t - z_j^t\|_2 \\
        &= \sum_{\edij \in \E_h} \|z_i^t - z_j^t\|_2^2 \ind_{\|z_i^t - z_j^t\|_2 > \tau^t} \\ 
        & \qquad - \tau^t \left(\sum_{\edij \in \E_h} \|z_i^t - z_j^t\|_2 \ind_{\|z_i^t - z_j^t\|_2 > \tau^t} - \Delta_{\infty}\sum_{\edij \in \E_h} \|z_i^t - z_j^t\|_2^2 \right).
\end{align*}

Let's consider any $\kappa^t \in \N$ such that
\[
\kappa^t \in \left[\sum_{\edij \in \E_h} \ind_{\|z_i^t - z_j^t\|_2 > \tau^t}, \sum_{\edij \in \E_h} \ind_{\|z_i^t - z_j^t\|_2 \ge \tau^t} -1
\right].
\]Then, using 
\[
\|C_h^TZ_h^t\|_{p,\kappa^t}^p := \sum_{\substack{\edij \in \E_h\\ \kappa^t \text{ largest}}} \|z_i^t - z_j^t\|_2^p,
\]
the previous inequality writes
\[
\langle Z_h^t - \overline{Z^t_h}, E^t\rangle \le \|C_h^TZ_h^t\|_{2,\kappa^t}^2 - \tau^t\left( \|C_h^TZ_h^t\|_{1,\kappa^t} - \Delta_{\infty}\|C_h^TZ_h^t\|_1\right).
\]

From this upper bound we could derive a clipping rule taking only into account the error induced by clipping and Byzantine nodes on the variance. In this analysis, we will include the second error term induced by the bias. 

\item Bias-induced error.\\
We consider the term $\langle \overline{Z_h^t},\overline{Z_h^{t+1}} - \overline{Z_h^t}\rangle$. We remark that, using that $Z^{t+1}_h = (I_h-C_hC_h^T)Z^t_h + \eta E^t$, and considering that $\1_{n_h}^T C_h = 0$, using the definition of $E^t$ we get:
\[
\1_{n_h}^T Z_h^{t+1} =  \1_{n_h}^T Z_h^{t} + \eta \tau^t \1_{n_h}^T C_b U_b^t = \sum_{s=0}^t \eta \tau^s \1_{n_h}^TC_b U_b^s.
\]
Using this, we remark that 
\[
\eta^2 \|\overline{E^t}\|_2^2 = \| \overline{Z^{t+1}} -\overline{Z^{t}} \|_2^2 \implies \eta^2 \|\overline{E^t}\|_2^2 + \langle \overline{Z_h^t},\overline{Z_h^{t+1}} - \overline{Z_h^t}\rangle = \langle \overline{Z_h^{t+1}},\overline{Z_h^{t+1}} - \overline{Z_h^t}\rangle.
\]
Thus we get
\begin{align*}
\langle \overline{Z_h^{t+1}},\overline{Z_h^{t+1}} - \overline{Z_h^t}\rangle &=
\left\langle \sum_{s=0}^{t} \eta \tau^s \frac{1}{n_h}\1_{n_h}\1_{n_h}^TC_b U_b^s ,\eta \tau^t \frac{1}{n_h}\1_{n_h} \1_{n_h}^TC_b U_b^t \right\rangle \\
&= \frac{\tau^t \eta^2}{n_h}\sum_{s=0}^{t} \tau^s \left\langle  \1_{n_h}^TC_b U_b^s , \1_{n_h}^TC_b U_b^t \right\rangle\\
&= \frac{\tau^t \eta^2}{n_h}\sum_{s=0}^{t} \tau^s\left\langle \sum_{i \in \V_h} N_b(i) u_i^s , \sum_{i \in \V_h} N_b(i) u_i^t \right\rangle.
\end{align*}
Where $u_i^t$ is the mean vector of Byzantine neighbors of node $i\in \V_h$, so that $\|u_i^t\|_2 \le 1$, thus we get
\begin{align*}
    \langle \overline{Z_h^t},\overline{Z_h^{t+1}} - \overline{Z_h^t}\rangle &\le 
    \frac{\tau^t \eta^2}{n_h}\sum_{s=0}^{t} \tau^s \left(\sum_{i \in \V_h} N_b(i)\right)^2 = \frac{\eta^2 \tau^t |\E_b|^2}{n_h}\sum_{s=0}^{t}\tau^s.
\end{align*}

\end{enumerate}

Now that we controlled both terms, we can mix both error terms:
\begin{align*}
     &\langle Z_h^t - \overline{Z^t_h}, E^t \rangle + 
    \eta^{-1} \langle \overline{Z^t_h}, \overline{Z^{t+1}_h} - \overline{Z^t_h} \rangle\\
    &\le \|C_h^TZ_h^t\|_{2,\kappa^t}^2 - \tau^t\left( \|C_h^TZ_h^t\|_{1,\kappa^t} - \Delta_{\infty}\|C_h^TZ_h^t\|_1\right) + \tau^t \frac{\eta |\E_b|^2}{n_h} \sum_{s=0}^{t-1}\tau^s\\
    &= \|C_h^TZ_h^t\|_{2,\kappa^t}^2 - \tau^t\left( \|C_h^TZ_h^t\|_{1,\kappa^t} - \Delta_{\infty}\|C_h^TZ_h^t\|_1 - \frac{\eta |\E_b|^2}{n_h} \sum_{s=0}^{t}\tau^s\right).
\end{align*}

Hence, as long as 
\begin{equation}
    \label{eq:proof:asmpt_global_clipp}
 \|C_h^TZ_h^t\|_{1,\kappa^t} \ge \Delta_{\infty}\|C_h^TZ_h^t\|_1 + \frac{\eta |\E_b|^2}{n_h} \sum_{s=0}^{t}\tau^s, 
\end{equation}
we have that 
\[
\langle Z_h^t - \overline{Z^t_h}, E^t \rangle + 
    \eta^{-1} \langle \overline{Z^{t+1}_h}, \overline{Z^{t+1}_h} - \overline{Z^t_h} \rangle \le \|C_h^T Z_h^t\|^2_{2,\kappa^t}.
\]
When \cref{eq:proof:asmpt_global_clipp} cannot be enforced using $\kappa^t \le |\E_h|-1$, then we take $\tau^t = 0$ (thus $\kappa^t = |\E_h|$), everything is clipped and nodes parameters don't move anymore.
\end{proof}

\begin{lemma}(Control of first and 2nd order error terms together)
    \label{proof:lemma5global}
    Assume \cref{asmpt::condition_on_clipped_edges} and that 
    \[
    \eta \le \frac{1}{\mu_{\max}(L_h)},    
    \]
    then by denoting the first and 2nd order error term, divided by $\eta$ as
    \[
    \zeta^t := \eta \|L_h Z_h^t - E^t\|^2 + 2 \langle Z_h^t - \overline{Z_h^t}, E^t \rangle + 2 \eta^{-1}\langle \overline{Z^t_h}, \overline{Z^{t+1}_h} - \overline{Z^t_h} \rangle.
    \]
    We have the control:
    \[
    \zeta^t \le \eta \mu_{\max}(L_h) \left(\|C_h^TZ_h^t\|^2_{2,-\kappa^t} + (\tau^t)^2\left[\kappa^t (1 - 2\Delta_{\infty}) + \Delta_{\infty}^2|\E_h|\right]    \right) +2 \|C_h^T Z_h^t \|_{2, \kappa^t}^2.
    \]

\end{lemma}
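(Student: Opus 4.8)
The goal is to bound the aggregate ``error'' term $\zeta^t$ by combining the two ingredients already prepared: the elementary expansion of $\|L_h Z_h^t - E^t\|^2$ and the joint control of the first- and second-order errors in \Cref{proof:lemma4global}. I would start from the definition of $\zeta^t$ and observe that the last two terms are \emph{almost} what \Cref{proof:lemma4global} handles: we have $2\langle Z_h^t-\overline{Z_h^t},E^t\rangle + 2\eta^{-1}\langle \overline{Z_h^t},\overline{Z_h^{t+1}}-\overline{Z_h^t}\rangle$, and \Cref{proof:lemma4global} gives $\langle Z_h^t-\overline{Z_h^t},E^t\rangle + \eta^{-1}\langle \overline{Z_h^t},\overline{Z_h^{t+1}}-\overline{Z_h^t}\rangle + \eta^2\|\overline{E^t}\|_2^2 \le \eta\|C_h^TZ_h^t\|^2_{\kappa^t}$ (using, as in the proof of that lemma, the identity $\eta^2\|\overline{E^t}\|_2^2 + \langle\overline{Z_h^t},\overline{Z_h^{t+1}}-\overline{Z_h^t}\rangle = \langle\overline{Z_h^{t+1}},\overline{Z_h^{t+1}}-\overline{Z_h^t}\rangle$, so everything is consistent). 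Doubling this yields $2\langle Z_h^t-\overline{Z_h^t},E^t\rangle + 2\eta^{-1}\langle\ldots\rangle \le 2\eta\|C_h^TZ_h^t\|^2_{2,\kappa^t} - 2\eta^2\|\overline{E^t}\|_2^2 \le 2\|C_h^TZ_h^t\|^2_{2,\kappa^t}$ once $\eta\mu_{\max}(L_h)\le 1$ is invoked (actually we just drop $-2\eta^2\|\overline{E^t}\|^2\le 0$, and the $\eta\le 1$ worry is about the leading $2\|C_h^TZ_h^t\|^2_{2,\kappa^t}$ versus $2\eta\|\cdot\|^2$ — I need to be careful which bound I keep).

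Next I would handle the genuinely second-order term $\eta\|L_h Z_h^t - E^t\|^2$. Writing $L_h Z_h^t - E^t = L_h Z_h^t - C_h\big(C_h^TZ_h^t - \clip(C_h^TZ_h^t;\tau^t)\big) - \tau^t C_bU_b^t$, and using $L_h = C_hC_h^T$, this regroups as $C_h\big(\clip(C_h^TZ_h^t;\tau^t) - \tau^t C_h^\dagger C_bU_b^t\big)$ up to the range/pseudoinverse bookkeeping (here $C_h^TZ_h^t$ lies in the row space so $C_hC_h^\dagger$ acts as identity on it; $C_bU_b^t$ must be handled via $C_h^\dagger$). Hence $\|L_h Z_h^t - E^t\|^2 = \|C_h(\clip(C_h^TZ_h^t;\tau^t) - \tau^t C_h^\dagger C_bU_b^t)\|^2 \le \mu_{\max}(L_h)\,\|\clip(C_h^TZ_h^t;\tau^t) - \tau^t C_h^\dagger C_bU_b^t\|_F^2$, using $\|C_h v\|^2\le\mu_{\max}(C_hC_h^T)\|v\|^2 = \mu_{\max}(L_h)\|v\|^2$. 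Then expand the square: the cross term contributes something like $-2\tau^t\langle\clip(C_h^TZ_h^t;\tau^t),C_h^\dagger C_bU_b^t\rangle$, which I bound by $\|\cdot\|_{1,2}$--$\|\cdot\|_{\infty,2}$ duality and the definition of $\Delta_\infty$ — and $\|\clip(C_h^TZ_h^t;\tau^t)\|_{1,2}$ splits into the $\kappa^t$ clipped rows (each of norm exactly $\tau^t$, contributing $\kappa^t\tau^t$) plus the unclipped rows whose $1,2$-norm is $\|C_h^TZ_h^t\|_{1,2;-\kappa^t}\le$ (crudely) something controllable; and $\|\tau^t C_h^\dagger C_bU_b^t\|_F^2\le (\tau^t)^2\Delta_\infty^2|\E_h|$ via the relation between $\|\cdot\|_F$ and $\|\cdot\|_{\infty,2}$ with $|\E_h|$ rows. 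Collecting: $\|\clip(C_h^TZ_h^t;\tau^t)\|_F^2 = \|C_h^TZ_h^t\|_{2,-\kappa^t}^2 + \kappa^t(\tau^t)^2$, the cross term is at most $2(\tau^t)^2\Delta_\infty\kappa^t$ in absolute value after using that the clipped rows dominate (this is where $\Delta_\infty$ and the counting of clipped edges enter), and the last term is $(\tau^t)^2\Delta_\infty^2|\E_h|$, giving exactly the bracket $\kappa^t(1-2\Delta_\infty) + \Delta_\infty^2|\E_h|$ after multiplication by $\mu_{\max}(L_h)$.

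The main obstacle I anticipate is the careful algebra of the cross term in $\|L_h Z_h^t - E^t\|^2$: isolating which part of $\clip(C_h^TZ_h^t;\tau^t)$ pairs against $\tau^t C_h^\dagger C_bU_b^t$, and making the $\|\cdot\|_{1,2}$--$\|\cdot\|_{\infty,2}$ bound tight enough that the coefficient of $(\tau^t)^2$ comes out as $\kappa^t(1-2\Delta_\infty)$ and not something weaker — in particular I must use that only the $\kappa^t$ clipped rows have norm $\tau^t$ while the cross term against the \emph{clipped} sub-part is what produces the $-2\Delta_\infty\kappa^t$ saving, and that the contribution of the unclipped rows to the cross term must be absorbed elsewhere (presumably cancelled by, or folded into, the $2\|C_h^TZ_h^t\|^2_{2,\kappa^t}$ slack, or into the $\|C_h^TZ_h^t\|_{2,-\kappa^t}^2$ term). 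A secondary bookkeeping point is the pseudoinverse identity $C_hC_h^\dagger(C_h^TZ_h^t) = C_h^TZ_h^t$ versus how $C_bU_b^t$ is projected — I would state explicitly that $E^t$ and $L_hZ_h^t$ both lie in $\mathrm{range}(C_h)$ (using $\mathbf{1}_{n_h}^TC_h=0$ only affects the averaged component, which we have already separated), so factoring $C_h$ out and applying the spectral bound is legitimate. Once these are in place, summing the three contributions and the $2\|C_h^TZ_h^t\|^2_{2,\kappa^t}$ from the first part gives the claimed inequality; the hypothesis $\eta\le\mu_{\max}(L_h)^{-1}$ is used exactly once, to pass from the $\eta\|L_hZ_h^t-E^t\|^2$ prefactor (which would naively give $\eta\mu_{\max}(L_h)\cdot\mu_{\max}(L_h)\|\cdot\|^2$) down to a single $\mu_{\max}(L_h)$.
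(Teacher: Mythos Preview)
Your decoupled strategy—applying \Cref{proof:lemma4global} as a black box to the first-order terms, then bounding $\eta\|L_hZ_h^t-E^t\|^2$ on its own—cannot produce the coefficient $\kappa^t(1-2\Delta_\infty)$; it yields $\kappa^t(1+2\Delta_\infty)$ instead. The reason is the sign of the cross term. After writing $\|L_hZ_h^t-E^t\|^2 \le \mu_{\max}(L_h)\|\clip(C_h^TZ_h^t;\tau^t)-\tau^tC_h^\dagger C_bU_b^t\|^2 + \|\overline{E^t}\|^2$ and expanding, the cross term is $-2\tau^t\langle\clip(\cdot),C_h^\dagger C_bU_b^t\rangle$. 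To \emph{upper} bound this you need a \emph{lower} bound on the inner product, and since $U_b^t$ is adversarial the only available one is $\langle\clip(\cdot),C_h^\dagger C_bU_b^t\rangle\ge -\Delta_\infty\|\clip(\cdot)\|_{1,2}$. This makes the cross-term contribution $+2\tau^t\Delta_\infty\|\clip(\cdot)\|_{1,2}$, hence $+2\Delta_\infty\kappa^t(\tau^t)^2$ in the bracket, not $-2\Delta_\infty\kappa^t(\tau^t)^2$. Your heuristic that ``the cross term against the clipped sub-part produces the $-2\Delta_\infty\kappa^t$ saving'' is therefore incorrect: in isolation that term goes the wrong way.

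The paper does \emph{not} separate the two pieces. It reopens the expansion of $2\langle Z_h^t-\overline{Z_h^t},E^t\rangle$ (exactly the computation inside \Cref{proof:lemma4global}) so that the same inner product $\langle C_h^TZ_h^t,\tau^tC_h^\dagger C_bU_b^t\rangle$ appears there with coefficient $+2$, while in the second-order piece it appears (after writing $\clip=C_h^TZ_h^t-(C_h^TZ_h^t-\clip)$) with coefficient $-2\eta\mu_{\max}(L_h)$. Combined, the coefficient is $2(1-\eta\mu_{\max}(L_h))\ge 0$, and \emph{this} is where the hypothesis $\eta\le\mu_{\max}(L_h)^{-1}$ is used—not, as you suggest, to reduce a spurious extra factor of $\mu_{\max}(L_h)$ (the bound $\|C_hv\|^2\le\mu_{\max}(L_h)\|v\|^2$ already gives only one factor). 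With the merged coefficient nonnegative, a single application of the $\Delta_\infty$ bound yields $2\Delta_\infty\tau^t\|C_h^TZ_h^t\|_{1}$ (absorbed by the GCR exactly as in \Cref{proof:lemma4global}) together with a leftover $-2\eta\mu_{\max}(L_h)\Delta_\infty\tau^t\|\clip(\cdot)\|_{1}$, and it is this leftover that delivers the $-2\Delta_\infty\kappa^t$ saving. In short, the missing idea is that the same adversarial $U_b^t$ governs both the first- and second-order cross terms, so you must merge them \emph{before} taking the worst case; treating them independently double-counts the adversary's power.
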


\begin{proof}
    This part of the proof allows to compute the maximum stepsize usable. We want to control the discretization term $\eta^2 \|L_h Z_h^t - E^t\|^2_2$. If $E^t = 0$ it would have been done using a step-size small enough. In this case we will do the same, but we will monitor the interactions between the discretization and the control of the error.
    Note that
    \begin{equation}
    \label{proof:eq:error_smaller_than_tau}
     E^t -L_h Z_h^t = C_h\clip(C_h^TZ_h^t;\tau^t) + \tau ^t C_b U_b^t,
    \end{equation}
    and that at the first order $\langle C_g\clip(C_g^TZ_h^t;\tau^t), \tau ^t C_b U_b^t \rangle \le 0$ if Byzantine aims at preventing the convergence. Thus, we will include this second-order term in the proof of the \cref{proof:lemma2global}.

    We denote by $\mu_{\max}(L_h)$ the largest eigenvalue of $L_h$.

    We denote the first and 2nd order error term, divided by $\eta$ as
    \[
    \zeta^t := \eta \|L_h Z_h^t - E^t\|^2 + 2 \langle Z_h^t - \overline{Z_h^t}, E^t \rangle + 2 \eta^{-1}\langle \overline{Z^t_h}, \overline{Z^{t+1}_h} - \overline{Z^t_h} \rangle.
    \]
    Hence $\|Z_h^{t+1}\|^2 \le \|Z_h^{t}\|^2 + \eta \zeta^t - 2\eta \|C_h^TZ_h^t\|^2.$

    Using 
    \begin{align*}
        \|E^t -L_h Z_h^t\| & = \|C_h(\clip(C_h^TZ_h^t;\tau^t) - \tau ^t C_h^{\dagger} C_b U_b^t) + \overline{E^t}\|^2 \\ &= \|C_h(\clip(C_h^TZ_h^t;\tau^t) - \tau ^t C_h^{\dagger} C_b U_b^t) \|^2 + \|\overline{E^t}\|^2,
    \end{align*}
        we have:
    
    \begin{align*}
        \zeta^t &=
        \eta \|C_h (\clip(C_h^T Z_h^t;\tau^t) - \tau^t C_h^{\dagger}C_b U_b^t )\|^2_2 + \eta \|\overline{E^t} \|_2^2 \\
        & \quad+ 2 \langle Z_h^t - \overline{Z_h^t}, E^t \rangle + 2 \eta^{-1}\langle \overline{Z^t_h}, \overline{Z^{t+1}_h} - \overline{Z^t_h} \rangle \\
        &\le 
        \eta \mu_{\max}(L_h) \left(\|\clip(C_h^TZ_h^t;\tau^t)\|^2 + \|\tau^t  C_h^{\dagger}C_b U_b^t\|^2\right) \\
        &\quad- 2 \eta \mu_{\max}(L_h) \langle \clip(C_h^T Z_h^t) - C_h^T Z_h^t + C_h^T Z_h^t, \tau^t C_h^{\dagger}C_b U_b^t \rangle \\
        &\quad+ 2 \langle C_h^T Z_h^t,\tau^t C_h^{\dagger}C_b U_b^t \rangle 
        + 2 \langle C_h^T Z_h^t, C_h^T Z_h^t - \clip(C_h^T Z_h^t;\tau^t) \rangle
        + 2 \eta^{-1}\langle \overline{Z^{t+1}_h}, \overline{Z^{t+1}_h} - \overline{Z^t_h} \rangle\\
        &= 
        \eta \mu_{\max}(L_h) \left(\|\clip(C_h^TZ_h^t;\tau^t)\|^2 + \|\tau^t  C_h^{\dagger}C_b U_b^t\|^2\right)\\
        &\quad+ 2 \langle C_h^T Z_h^t, C_h^T Z_h^t - \clip(C_h^T Z_h^t;\tau^t) \rangle
        + 2 \eta^{-1}\langle \overline{Z^{t+1}_h}, \overline{Z^{t+1}_h} - \overline{Z^t_h} \rangle\\
        &\quad+ 2(1 - \eta \mu_{\max}(L_h)) \underbrace{\langle C_h^T Z_h^t,\tau^t C_h^{\dagger}C_b U_b^t \rangle}_{\le \|C_h^T Z_h^t\|_1 \Delta_{\infty}\tau^t} 
        \\
        &\quad+ 2 \eta \mu_{\max}(L_h)  \underbrace{\langle C_h^T Z_h^t - \clip(C_h^T Z_h^t;\tau^t),\tau^t C_h^{\dagger}C_b U_b^t \rangle}_{{\le \|C_h^T Z_h^t - \clip(C_h^T Z_h^t;\tau^t)\|_1 \Delta_{\infty}\tau^t}}.
    \end{align*}
Then we use that 
\[
\|C_h^T Z_h^t - \clip(C_h^T Z_h^t;\tau^t)\|_1 = \|C_h^T Z_h^t\|_1 - \| \clip(C_h^T Z_h^t;\tau^t)\|_1,
\]
to get, for $\eta \le \mu_{\max}(L_h)$,
\begin{align*}
    \zeta^t &\le 
        \eta \mu_{\max}(L_h) \left(\|\clip(C_h^TZ_h^t;\tau^t)\|^2 + \|\tau^t  C_h^{\dagger}C_b U_b^t\|^2\right)\\
        &\quad+ 2 \langle C_h^T Z_h^t, C_h^T Z_h^t - \clip(C_h^T Z_h^t;\tau^t) \rangle
        \quad+ 2 \eta^{-1}\langle \overline{Z^{t+1}_h}, \overline{Z^{t+1}_h} - \overline{Z^t_h} \rangle\\
        &\quad+ 2(1 - \eta \mu_{\max}(L_h)) \|C_h^T Z_h^t\|_1 \Delta_{\infty}\tau^t + 2\eta \mu_{\max}(L_h)  \|C_h^T Z_h^t\|_1 \Delta_{\infty}\tau^t \\
        & \quad- 2\eta\mu_{\max}(L_h) \| \clip(C_h^T Z_h^t;\tau^t)\|_1 \Delta_{\infty}\tau^t \\
        &\le 
        \eta \mu_{\max}(L_h) \left(\|\clip(C_h^TZ_h^t;\tau^t)\|^2 + \|\tau^t  C_h^{\dagger}C_b U_b^t\|^2 - 2\|\clip(C_h^T Z_h^t;\tau^t)\|_1 \Delta_{\infty}\tau^t \right)\\
        &\quad+ 2 \langle C_h^T Z_h^t, C_h^T Z_h^t - \clip(C_h^T Z_h^t;\tau^t) \rangle + 2\|C_h^T Z_h^t\|_1 \Delta_{\infty}\tau^t
        + 2 \eta^{-1}\langle \overline{Z^{t+1}_h}, \overline{Z^{t+1}_h} - \overline{Z^t_h}\rangle.
\end{align*}
Then, the second term is controlled using the proof of \cref{proof:lemma4global}.

By denoting 
\[
\Delta_2^2 := \sup_{\|U_b^t\|_{\infty,2}\le 1} \|C_h^{\dagger}C_b U_b^t\|^2_2, 
\]
we have that, under \cref{asmpt::condition_on_clipped_edges}
\begin{align*}
    \zeta^t 
    &\le \eta \mu_{\max}(L_h) \left( \|C_h^TZ_h^t\|^2_{2,-\kappa^t} + \tau^t \left(\tau^t \kappa^t + \tau^t \Delta_2^2 - 2\tau^t \kappa^t \Delta_{\infty}
    - 2\Delta_{\infty}\|C_h^T Z_h^t\|_{1,-\kappa^t} \right)\right) +2 \|C_h^T Z_h^t \|_{2, \kappa^t}^2\\
    &\le \eta \mu_{\max}(L_h) \left((1-\Delta_{\infty}) \|C_h^TZ_h^t\|^2_{2,-\kappa^t} + (\tau^t)^2\left[\kappa^t (1 - 2\Delta_{\infty}) + \Delta_2^2\right]    \right) +2 \|C_h^T Z_h^t \|_{2, \kappa^t}^2.
\end{align*}

Eventually, leveraging that $\Delta_2^2 \le \Delta_{\infty}^2 |\E_h|$, we have
\[
\zeta^t \le \eta \mu_{\max}(L_h) \left(\|C_h^TZ_h^t\|^2_{2,-\kappa^t} + (\tau^t)^2\left[\kappa^t (1 - 2\Delta_{\infty}) + \Delta_{\infty}^2|\E_h|\right]    \right) +2 \|C_h^T Z_h^t \|_{2, \kappa^t}^2.
\]
\end{proof}

\begin{theorem}
    \label{proof:theoremGlobalClopping}
    Hence, under \cref{asmpt::condition_on_clipped_edges}, for 
    \[
    \eta \le \frac{\mu_{\max}(L_h)^{-1}}{1 + |\E_h| (1 -\Delta_{\infty})},
    \]
    we have 
    \[
    \|Z^{t+1}_h\|^2 \le \|Z^{t}_h\|^2 - \eta \|C_h^TZ^{t}_h\|^2_{2, -\kappa^t}.
    \]
\end{theorem}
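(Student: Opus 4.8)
The plan is to chain the four lemmas that precede the statement and then absorb the leftover second‑order term using the precise step size. First I would start from the exact identity of \cref{proof:lemma3global} and merely regroup: writing $\zeta^t$ for the quantity introduced in \cref{proof:lemma5global} (which bundles the discretization contribution $\eta\|L_hZ_h^t-E^t\|^2$, the first‑order variance error $2\langle Z_h^t-\overline{Z_h^t},E^t\rangle$, and the second‑order bias term $2\eta^{-1}\langle\overline{Z_h^t},\overline{Z_h^{t+1}}-\overline{Z_h^t}\rangle$), the definition makes $\eta\zeta^t$ equal to the sum of exactly those three contributions, so \cref{proof:lemma3global} reads $\|Z_h^{t+1}\|^2 = \|Z_h^t\|^2 + \eta\zeta^t - 2\eta\|C_h^TZ_h^t\|^2$.

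Next I would plug in \cref{proof:lemma5global} — the step where the GCR (via \cref{proof:lemma4global}) and the weak restriction $\eta\le\mu_{\max}(L_h)^{-1}$ have already been invoked — to bound $\zeta^t \le \eta\mu_{\max}(L_h)\big(\|C_h^TZ_h^t\|^2_{2,-\kappa^t} + (\tau^t)^2[\kappa^t(1-2\Delta_\infty)+\Delta_\infty^2|\E_h|]\big) + 2\|C_h^TZ_h^t\|^2_{2,\kappa^t}$. I would then split $\|C_h^TZ_h^t\|^2 = \|C_h^TZ_h^t\|^2_{2,\kappa^t} + \|C_h^TZ_h^t\|^2_{2,-\kappa^t}$ in the main descent term: the clipped‑edge pieces $+2\eta\|C_h^TZ_h^t\|^2_{2,\kappa^t}$ coming from $\eta\zeta^t$ and $-2\eta\|C_h^TZ_h^t\|^2_{2,\kappa^t}$ coming from $-2\eta\|C_h^TZ_h^t\|^2$ cancel, which leaves $\|Z_h^{t+1}\|^2 \le \|Z_h^t\|^2 - 2\eta\|C_h^TZ_h^t\|^2_{2,-\kappa^t} + \eta^2\mu_{\max}(L_h)\big(\|C_h^TZ_h^t\|^2_{2,-\kappa^t} + (\tau^t)^2[\kappa^t(1-2\Delta_\infty)+\Delta_\infty^2|\E_h|]\big)$.

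Finally I would show the positive residual is at most $\eta\|C_h^TZ_h^t\|^2_{2,-\kappa^t}$, which is exactly where the explicit form of the step size is needed. Setting $x := \kappa^t/|\E_h|\in[0,1]$, the elementary estimate $x(1-2\Delta_\infty)+\Delta_\infty^2 \le 1-\Delta_\infty$ (valid under \cref{assumption:Delta_inf}: for $\Delta_\infty\le 1/2$ bound it by $(1-\Delta_\infty)^2$ using $x\le1$, and for $\Delta_\infty>1/2$ drop the nonpositive term $x(1-2\Delta_\infty)$) gives $\kappa^t(1-2\Delta_\infty)+\Delta_\infty^2|\E_h|\le|\E_h|(1-\Delta_\infty)$; together with $\eta\mu_{\max}(L_h)\le(1+|\E_h|(1-\Delta_\infty))^{-1}$ this controls the two halves of the residual by $\frac{\eta}{1+|\E_h|(1-\Delta_\infty)}\|C_h^TZ_h^t\|^2_{2,-\kappa^t}$ and $\frac{\eta|\E_h|(1-\Delta_\infty)}{1+|\E_h|(1-\Delta_\infty)}(\tau^t)^2$ respectively, whose sum is $\le\eta\|C_h^TZ_h^t\|^2_{2,-\kappa^t}$ once one knows $(\tau^t)^2\le\|C_h^TZ_h^t\|^2_{2,-\kappa^t}$. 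I expect this last inequality to be the main obstacle: a non‑clipped edge may be arbitrarily shorter than the threshold, so it does not follow from the definition of $\kappa^t$ alone and has to be extracted from the ceiling that the GCR imposes on $\tau^t$ relative to the clipped mass — with the degenerate case $\tau^t=0$ (hence $X_h^{t+1}=X_h^t$, $\kappa^t=|\E_h|$, and the claimed inequality holds with equality) treated separately. Reconciling the $(\tau^t)^2$ residual produced by \cref{proof:lemma5global} with the non‑clipped variance on the right‑hand side in this (near‑)degenerate regime is the one step that is not pure bookkeeping.
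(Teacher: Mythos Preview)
Your chain of lemmas and the cancellation $-2\eta\|C_h^TZ_h^t\|^2 + 2\eta\|C_h^TZ_h^t\|^2_{2,\kappa^t} = -2\eta\|C_h^TZ_h^t\|^2_{2,-\kappa^t}$ are exactly what the paper does, and the simplification $\kappa^t(1-2\Delta_\infty)+\Delta_\infty^2|\E_h|\le|\E_h|(1-\Delta_\infty)$ is also the same one the paper invokes to pass from the $\kappa^t$-dependent step-size bound to the stated one.

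The only real gap is in the step you flag. You are right that $(\tau^t)^2\le\|C_h^TZ_h^t\|^2_{2,-\kappa^t}$ is the one non-bookkeeping ingredient, but you misidentify its source: it does \emph{not} come from the GCR. Look back at the proof of \cref{proof:lemma4global}, where $\kappa^t$ is chosen in the interval
\[
\kappa^t \in \Big[\textstyle\sum_{\edij\in\E_h}\ind_{\|z_i^t-z_j^t\|_2>\tau^t},\ \sum_{\edij\in\E_h}\ind_{\|z_i^t-z_j^t\|_2\ge\tau^t}-1\Big].
\]
The upper endpoint means that at least $\kappa^t+1$ honest edges have norm $\ge\tau^t$; after removing the $\kappa^t$ largest, at least one edge in the ``$-\kappa^t$'' set still has norm exactly $\tau^t$ (the lower endpoint guarantees none exceeds it). Hence the largest summand in $\|C_h^TZ_h^t\|^2_{2,-\kappa^t}$ equals $(\tau^t)^2$, and in particular $(\tau^t)^2\le\|C_h^TZ_h^t\|^2_{2,-\kappa^t}$ immediately. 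Operationally this just says $\tau^t$ is taken to be the norm of the $(\kappa^t{+}1)$-th largest honest edge, which is how one implements a ``clip-the-top-$\kappa^t$'' rule anyway. So the resolution is purely combinatorial from the refined choice of $\kappa^t$ already made inside \cref{proof:lemma4global}, not an analytic consequence of the GCR upper bound on $\tau^t$; once you use that, the paper combines the two pieces into the single factor $2-\eta\mu_{\max}(L_h)\big(1+\kappa^t(1-2\Delta_\infty)+\Delta_\infty^2|\E_h|\big)\ge 1$ and concludes.
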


\begin{proof}
    We start from \cref{proof:lemma3global} and use the majoration of the error from \cref{proof:lemma5global}:
    \begin{align*}
        \|Z^{t+1}_h\|^2 & =\|Z_h^t\|^2 - 2 \eta \| C_h^T Z_h^t\|^2 + \eta^2\|L_h Z_h^t - E^t\|^2  + 2\eta \langle Z_h^t - \overline{Z_h^t}, E^t \rangle + 2 \langle \overline{Z^t_h}, \overline{Z^{t+1}_h} - \overline{Z^t_h} \rangle\\
        &\le \|Z_h^t\| - 2 \eta \|C_h^TZ_h^t\|_2^2 + +2 \eta \|C_h^T Z_h^t \|_{2, \kappa^t}^2 \\
        &+ \eta^2 \mu_{\max}(L_h) \left(\|C_h^TZ_h^t\|^2_{2,-\kappa^t} + (\tau^t)^2\left[\kappa^t (1 - 2\Delta_{\infty}) + \Delta_{\infty}^2|\E_h|\right]\right) \\
        &= \|Z_h^t\| - 2 \eta \|C_h^TZ_h^t\|^2_{2,-\kappa^t} \\
        &+ \eta^2 \mu_{\max}(L_h) \left(\|C_h^TZ_h^t\|^2_{2,-\kappa^t} + (\tau^t)^2\left[\kappa^t (1 - 2\Delta_{\infty}) + \Delta_{\infty}^2|\E_h|\right]\right)\\
        &\le \|Z_h^t\| - \eta \left[ 2 - \eta \mu_{\max}(L_h) \left( 1 + \kappa^t (1 - 2\Delta_{\infty}) + \Delta_{\infty}^2|\E_h|  \right)\right]
        \|C_h^TZ_h^t\|^2_{2,-\kappa^t}.
    \end{align*}
    Where we used that, as $\kappa^t \le \sum_{\edij \in \E_h}\ind_{\|z_i^t - z_j^t\| \ge \tau^t} - 1$, the largest term in $\|C_h^TZ_h^t\|^2_{2,-\kappa^t}$ is $(\tau^t)^2$.

Hence, for 
\[
\eta \le \frac{\mu_{\max}(L_h)^{-1}}{1 + \kappa^t (1 - 2\Delta_{\infty}) + \Delta_{\infty}^2|\E_h|},
\]
or, using a simpler (and stricter) condition
\[
\eta \le \frac{\mu_{\max}(L_h)^{-1}}{1 + |\E_h| (1 -\Delta_{\infty})},
\]
we have 
\begin{align*}
\|Z^{t+1}_h\|^2 \le \|Z^{t}_h\|^2 - \eta\|C_h^TZ^{t}_h\|^2_{2, -\kappa^t}.
\end{align*}
\end{proof}

\subsubsection{On the computation of $\Delta_{\infty}$}
\label{app:computing_quantities}

We recall that 
\[
        \Delta_{\infty} := \sup_{\substack{U_b \in \R^{\E_b \times d} \\ \|U_b\|_{\infty,2}\le 1}} \|C_h^{\dagger}C_b U_b\|_{\infty,2}.
        \]
\begin{proposition}(\textbf{Computing $\Delta_{\infty}$})
\label{proof:prop:Delta_infty}
        Consider a fully connected graph $\G$, we have that
        \[
        \Delta_{\infty} \le \sup_{\edij \in \E_h}\frac{2n_b}{n_h}\sqrt{d\wedge |\E_b|},
        \]
        where $n_b$ is the number of Byzantines nodes and $n_h$ the number of honest nodes.
\end{proposition}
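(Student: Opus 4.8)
The plan is to compute $\Delta_\infty$ explicitly in the fully connected case by exploiting that $C_hC_h^T = L_h$ is the Laplacian of the complete honest graph $K_{n_h}$, whose pseudo-inverse is well understood: on the subspace orthogonal to $\1_{n_h}$, $L_h$ acts as $n_h \cdot \mathrm{Id}$, so $C_h^\dagger$ acts essentially like $\frac{1}{n_h}C_h^T$ up to the centering projector $P := \mathrm{Id} - \frac{1}{n_h}\1_{n_h}\1_{n_h}^T$. Concretely I would first write $C_h^\dagger C_b = (C_h^T)^\dagger \big((C_h C_h^T)^\dagger C_h C_b\big)$, or more directly use $C_h^\dagger = (C_h^T C_h)^{-1}\ldots$ — the cleaner route is $C_h^\dagger C_b U_b = C_h^\dagger C_b U_b$ where $C_b U_b \in \R^{\V_h \times d}$ is the vector of aggregated Byzantine contributions: its $j$-th row is $\sum_{\dirij \in \E_b}(U_b)_{\dirij,:}$, so each row has norm at most $N_b(j)$, and in the uniform case exactly $N_b$. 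Then $\|C_b U_b\|_{\infty,2} \le N_b$ row-wise and $\|C_b U_b\|_{F}^2 \le n_h N_b^2 \wedge (\text{rank bound involving } d)$.

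**Bounding the pseudo-inverse action.**
Next I would control $\|C_h^\dagger M\|_{\infty,2}$ for $M = C_b U_b$. Since $C_h^\dagger = L_h^\dagger C_h$ is not quite right, I would instead use that for any $M$ with $\1^T M = 0$ component, $C_h^\dagger M$ is the minimum-norm $\Lambda$ with $C_h\Lambda = PM$, and $\|C_h \Lambda\|_F^2 = \langle \Lambda, L_h \Lambda\rangle \ge \mu_{\min}^+(L_h)\|\Lambda\|_F^2 = n_h \|\Lambda\|_F^2$. This gives a Frobenius bound $\|C_h^\dagger M\|_F \le \frac{1}{\sqrt{n_h}}\|PM\|_F \le \frac{1}{\sqrt{n_h}}\|M\|_F$. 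To pass from the Frobenius norm to the $\infty,2$ norm I would bound $\|C_h^\dagger M\|_{\infty,2} \le \|C_h^\dagger M\|_{F}$ crudely, which combined with $\|M\|_F \le \sqrt{n_h}\, N_b$ gives $\Delta_\infty \le N_b$ — too weak. The right refinement is to keep the per-row structure: each row of $C_h^\dagger M$ indexed by an honest edge $\edij$ equals roughly $\frac{1}{n_h}(M_{j,:} - M_{i,:})$ (the formula for the complete-graph Laplacian pseudo-inverse composed with $C_h$), whose norm is at most $\frac{1}{n_h}(\|M_{j,:}\| + \|M_{i,:}\|) \le \frac{2N_b}{n_h}$ in the uniform case — already this yields $\Delta_\infty \le \frac{2N_b}{n_h}$, and the $\sqrt{d \wedge n_h N_b}$ factor must come from the finer accounting where $M_{j,:}$ ranges over $d$-dimensional vectors and the worst case aligns them.

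**The dimension factor and the main obstacle.**
The factor $\sqrt{d\wedge |\E_b|}$ (equivalently $\sqrt{d \wedge n_h N_b}$ after identifying $|\E_b|$ with $n_h N_b$) is where I expect the real work to be, and it is the main obstacle: it arises because $U_b \in \R^{\E_b\times d}$ has $\|U_b\|_{\infty,2}\le 1$ per row, so $\|U_b\|_F^2 \le |\E_b| \wedge (\text{something involving } d)$ — more precisely $\|C_b U_b\|_F \le \sqrt{d}\,\|C_b U_b\|_{\infty,2}\cdot(\text{const})$ since each row lives in $\R^d$, while also $\|C_b U_b\|_F \le \sqrt{|\E_b|}$ trivially. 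I would carefully track which of the two bounds ($d$ versus $|\E_b|$) is active, take the minimum, and combine with the $\frac{1}{n_h}$ from the pseudo-inverse and the factor $2$ from the edge-difference structure. The plan is: (i) reduce to bounding $\sup_j \|(C_h^\dagger C_b U_b)_{\text{row}}\|_2$, (ii) express this row via the explicit complete-graph Laplacian pseudo-inverse $L_h^\dagger = \frac{1}{n_h}(\mathrm{Id} - \frac{1}{n_h}\1\1^T)$ so the $j$-th entry picks out a centered difference of the $\R^d$-valued aggregated Byzantine vectors, (iii) bound that difference's norm by $\frac{2}{n_h}\max_i N_b(i)$ using $\|U_b\|_{\infty,2}\le 1$, and (iv) invoke the alternative Frobenius estimate $\|C_h^\dagger C_b U_b\|_{\infty,2} \le \|C_h^\dagger C_b U_b\|_F \le \frac{1}{\sqrt{n_h}}\|C_b U_b\|_F \le \frac{1}{\sqrt{n_h}}\cdot\sqrt{n_h}\cdot\frac{2N_b}{\sqrt{n_h}}\cdot\sqrt{d\wedge|\E_b|}$ type chain to recover the stated form, then take the smaller of the two. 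Finally I would remark that uniformity ($N_b(i)=N_b$ for all $i$) is what makes the bound clean, and that the old statement (with $n_b$ in place of $N_b$) is the special sub-case where the complete graph has all honest nodes sharing the same $n_b$ Byzantine neighbors.
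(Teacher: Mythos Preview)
Your core approach is exactly the paper's: use that for the complete honest graph $L_h = n_h I_h - \1_h\1_h^T$, hence $C_h^\dagger = C_h^T L_h^\dagger = \tfrac{1}{n_h}C_h^T$ (since $C_h^T\1_h=0$), and then bound each row of $\tfrac{1}{n_h}C_h^T C_b U_b$, which indeed equals $\tfrac{1}{n_h}\big((C_bU_b)_{j,:}-(C_bU_b)_{i,:}\big)$ for the edge $\edij$.

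Where you go off-track is in the last section. Your step~(iii) already gives
\[
\Delta_\infty \le \frac{1}{n_h}\sup_{\edij\in\E_h}\big(N_b(i)+N_b(j)\big)=\frac{2N_b}{n_h},
\]
via the triangle inequality $\|(C_bU_b)_{j,:}\|_2\le N_b(j)$. This is \emph{stronger} than the statement, since $\sqrt{d\wedge|\E_b|}\ge 1$. The factor $\sqrt{d\wedge|\E_b|}$ is a \emph{loss}, not a refinement, so there is no ``main obstacle'' and step~(iv) is unnecessary. You have already proved the proposition at step~(iii).

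For comparison, the paper arrives at the looser bound by a different inequality at the per-row stage: instead of the $\ell^2$ triangle inequality on $(C_bU_b)_{j,:}$, it expands
\[
\big\|[C_h^T C_b U_b]_{\edij,:}\big\|_2^2
=\sum_{k=1}^d\big(([C_b]_{i,:}-[C_b]_{j,:})^T[U_b]_{:,k}\big)^2
\le \|[C_b]_{i,:}-[C_b]_{j,:}\|_1^2\sum_{k=1}^d\|[U_b]_{:,k}\|_\infty^2,
\]
using coordinate-wise H\"older $(1,\infty)$, and then bounds $\sum_k\|[U_b]_{:,k}\|_\infty^2\le d\wedge|\E_b|$. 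That is where the extra factor comes from. Your direct $\ell^2$ bound is both simpler and sharper; just stop after step~(iii) and note that the stated inequality follows a fortiori.
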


\begin{proof}
In a fully connected graph $L_h = n_h I_h - \1_h \1_h^T$, thus $C_h^{\dagger} = C_h^T L_h^{\dagger} = \frac{1}{n_h} C_h^T$. Using this, we compute:
\begin{align*}
    \Delta_{\infty} &= \sup_{\substack{U_b \in \R^{\E_b \times d} \\ \|U_b\|_{\infty,2}\le 1}} \|C_h^{\dagger}C_b U_b\|_{\infty,2} \\
    &=\frac{1}{n_h} \sup_{\substack{U_b \in \R^{\E_b \times d} \\ \|U_b\|_{\infty,2}\le 1}} \sup_{\edij \in \E_h}\|[C_h^{T}C_b U_b]_{\edij,:}\|_{2}.
\end{align*}

For $\edij \in \E_h,$
\begin{align*}
    \sup_{\substack{U_b \in \R^{\E_b \times d} \\ \|U_b\|_{\infty,2}\le 1}}\|[C_h^{T} C_b U_b]_{\edij,:}\|_{2}^2 &=
    \sup_{\substack{U_b \in \R^{\E_b \times d} \\ \|U_b\|_{\infty,2}\le 1}}
    \sum_{k=1}^d\left([C_h^{T} C_b]_{\edij,:}^T [U_b]_{:,k}\right)^2\\
&=
    \sup_{\substack{U_b \in \R^{\E_b \times d} \\ \|U_b\|_{\infty,2}\le 1}}
    \sum_{k=1}^d\left([[C_b]_{i,:} - [C_b]_{j,:}]^T [U_b]_{:,k}\right)^2\\
&\le
    \sup_{\substack{U_b \in \R^{\E_b \times d} \\ \|U_b\|_{\infty,2}\le 1}}
    \sum_{k=1}^d\|[[C_b]_{i,:} - [C_b]_{j,:}]\|_1^2 \|[U_b]_{:,k}\|_{\infty}^2\\
&\le (N_b(i) + N_b(j))^2
    \sup_{\substack{U_b \in \R^{\E_b \times d} \\ \|U_b\|_{\infty,2}\le 1}}
    \sum_{k=1}^d \|[U_b]_{:,k}\|_{\infty}^2\\
&\le (N_b(i) + N_b(j))^2 (d\wedge |\E_b|).
\end{align*}

Hence, we get:
\[
\Delta_{\infty} \le \sup_{\edij \in \E_h}\frac{N_b(i)+N_b(j)}{n_h}\sqrt{d\wedge |\E_b|}.
\]
Thus, assuming that, as we are in a fully-connected graph, every honest node is connected to $n_b$ Byzantine node, we get the result.
\end{proof}
\subsection{A simplified global clipping rule}
\label{app:simple_GCR}

The following proposition provides a sufficient condition for being a GCR that only requires $\kappa_t$ to be large enough, without any dependence on $X^t$. 

\begin{proposition}[Simplified GCR]\label{proof:prop:GCR_simple}
    If $(\tau^t)_{t \ge 0}$ is such that:
    for all $t\geq 0$, either $\tau^t = 0$ or
    \begin{equation}
    \kappa^t \ge \Delta_{\infty}|\E_h| + \eta\frac{|\E_b|^2}{{n_h}}\sum_{s\le t} \frac{\tau^s}{\tau^t},
    \end{equation}
then $(\tau^s)_{s\geq 0}$ satisfies the GCR.
\end{proposition}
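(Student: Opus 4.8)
The plan is to unfold the definition of the $1,2$-norm restricted to the $\kappa^t$ largest rows and compare it directly with the two sides of the GCR inequality in \Cref{asmpt::condition_on_clipped_edges}. If $\tau^t=0$ there is nothing to prove, since the GCR explicitly permits this case, so assume $\tau^t>0$. Write for brevity $S_\kappa := \|C_h^T X_h^t\|_{1,2;\kappa^t}$ for the sum of the $\kappa^t$ largest row norms of $C_h^T X_h^t$, and $S_{-\kappa} := \|C_h^T X_h^t\|_{1,2} - S_\kappa$ for the sum of the remaining $|\E_h|-\kappa^t$ row norms, so that the GCR reads $S_\kappa \ge \Delta_{\infty}(S_\kappa+S_{-\kappa}) + \eta\frac{|\E_b|^2}{n_h}\sum_{s\le t}\tau^s$.

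First I would record two elementary consequences of the definition \eqref{eq:kappa_t_glob_def} of $\kappa^t$. Since $\kappa^t$ counts exactly the honest edges $\edij$ with $\|(C_h^T X_h^t)_{\edij}\|_2>\tau^t$, these are precisely the $\kappa^t$ rows of largest norm, each of norm at least $\tau^t$; summing gives $S_\kappa \ge \kappa^t\tau^t$. Symmetrically, each of the remaining $|\E_h|-\kappa^t$ rows has norm at most $\tau^t$, so $S_{-\kappa}\le(|\E_h|-\kappa^t)\tau^t$. (Ties at the threshold value $\tau^t$ are harmless, as they only concern rows that are not among the $\kappa^t$ largest.)

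Then, using \Cref{assumption:Delta_inf}, i.e.\ $\Delta_{\infty}<1$, the GCR inequality is equivalent to $(1-\Delta_{\infty})S_\kappa \ge \Delta_{\infty} S_{-\kappa} + \eta\frac{|\E_b|^2}{n_h}\sum_{s\le t}\tau^s$. I would lower-bound the left-hand side by $(1-\Delta_{\infty})\kappa^t\tau^t$ and upper-bound the right-hand side by $\Delta_{\infty}(|\E_h|-\kappa^t)\tau^t + \eta\frac{|\E_b|^2}{n_h}\sum_{s\le t}\tau^s$; after cancelling the common term $\Delta_{\infty}\kappa^t\tau^t$ this reduces the claim to $\kappa^t\tau^t \ge \Delta_{\infty}|\E_h|\tau^t + \eta\frac{|\E_b|^2}{n_h}\sum_{s\le t}\tau^s$, and dividing by $\tau^t>0$ yields exactly the hypothesis $\kappa^t\ge\Delta_{\infty}|\E_h| + \eta\frac{|\E_b|^2}{n_h}\sum_{s\le t}\frac{\tau^s}{\tau^t}$, which completes the proof. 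I expect no serious difficulty in this argument; the only points requiring care are the bookkeeping of which rows lie above versus below the threshold $\tau^t$ (and the irrelevance of equalities there), and the fact that the rearrangement preserving the inequality direction is legitimate precisely because $1-\Delta_{\infty}>0$.
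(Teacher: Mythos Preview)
Your proof is correct and follows essentially the same approach as the paper: both rewrite the GCR as $(1-\Delta_{\infty})S_\kappa \ge \Delta_{\infty}S_{-\kappa} + \eta\frac{|\E_b|^2}{n_h}\sum_{s\le t}\tau^s$, then use the two elementary bounds $S_\kappa \ge \kappa^t\tau^t$ and $S_{-\kappa}\le(|\E_h|-\kappa^t)\tau^t$ to reduce to the hypothesis after dividing by $\tau^t$. Your explicit mention that $1-\Delta_{\infty}>0$ is needed for the lower-bounding step to preserve the inequality direction is a point the paper leaves implicit.
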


This result reads as a lower bound on the fraction  $ \kappa^t / |\E_h|$ of edges that are clipped. 

\begin{proof}
    (GCR), i.e \cref{eq:proof:asmpt_global_clipp} writes
    \[
    \|C_h^TZ_h^t\|_{1,\kappa^t} \ge \Delta_{\infty}\|C_h^TZ_h^t\|_1 + \frac{\eta |\E_b|^2}{n_h} \sum_{s=0}^{t}\tau^s.
    \]
    We leverage that $\|C_h^TZ_h^t\|_{1,\kappa^t} \ge \tau^t \kappa^t$ and that $\|C_h^TZ_h^t\|_{1,-\kappa^t}\le \tau^t (|\E_h|-\kappa^t)$ to write
    \begin{align*}
       & \|C_h^TZ_h^t\|_{1,\kappa^t} \ge \Delta_{\infty}\|C_h^TZ_h^t\|_1 + \frac{\eta |\E_b|^2}{n_h} \sum_{s=0}^{t}\tau^s \\ &
        \iff (1-\Delta_{\infty}) \|C_h^TZ_h^t\|_{1,\kappa^t} \ge \Delta_{\infty}\|C_h^TZ_h^t\|_{1,-\kappa^t} + \frac{\eta |\E_b|^2}{n_h} \sum_{s=0}^{t}\tau^s\\
        &\Longleftarrow
        (1-\Delta_{\infty}) \kappa^t\tau^t \ge \Delta_{\infty}\tau^t(|\E_h|-\kappa^t) + \frac{\eta |\E_b|^2}{n_h} \sum_{s=0}^{t}\tau^s\\
        &\iff
       \kappa^t\tau^t \ge \Delta_{\infty}\tau^t|\E_h| + \frac{\eta |\E_b|^2}{n_h} \sum_{s=0}^{t}\tau^s\\
       &\iff
       \kappa^t \ge \Delta_{\infty}|\E_h| + \frac{\eta |\E_b|^2}{n_h} \sum_{s=0}^{t}\frac{\tau^s}{\tau^t}.
    \end{align*}
\end{proof}

\end{document}